\newtheorem{theorem}{Theorem}[section]
\newtheorem{lemma}[theorem]{Lemma}
\newtheorem{proposition}[theorem]{Proposition}
\newcommand{\x}{\mathbf{x}}
\newenvironment{proof}[1][Proof]{\begin{trivlist}
\item[\hskip \labelsep {\bfseries #1}]}{\end{trivlist}}
\begin{document}
\title{Instance Selection Improves Geometric Mean Accuracy: A Study on Imbalanced Data Classification}

\author{Ludmila~I.~Kuncheva,
        Álvar~Arnaiz-González,
        José-Francisco~Díez-Pastor,
        and~Iain~A.~D.~Gunn
\IEEEcompsocitemizethanks{
\IEEEcompsocthanksitem L.~Kuncheva is with the School of Computer Science, Bangor University, Dean Street, Bangor, Gwynedd, Wales LL57 2NJ, UK (email: l.i.kuncheva@bangor.ac.uk).
\IEEEcompsocthanksitem \'A.~Arnaiz-Gonz\'alez and J.~D\'iez-Pastor are with the Escuela Polit\'ecnica Superior, Universidad de Burgos, Avda. de Cantabria s/n, 09006 Burgos, Spain (email: \{alvarag,jfdpastor\}@ubu.es).
\IEEEcompsocthanksitem I.~Gunn is with the Department of Computer Science, Middlesex University, London NW4 4BT (email: i.gunn@mdx.ac.uk).
}
}

\IEEEtitleabstractindextext{%
\begin{abstract}
A natural way of handling imbalanced data is to attempt to equalise the class frequencies and train the classifier of choice on balanced data. For two-class imbalanced problems, the classification success is typically measured by the geometric mean (GM) of the true positive and true negative rates. Here we prove that GM can be improved upon by instance selection, and give the theoretical conditions for such an improvement. We demonstrate that GM is non-monotonic with respect to the number of retained instances, which discourages systematic instance selection. We also show that balancing the distribution frequencies is inferior to a direct maximisation of GM. To verify our theoretical findings, we carried out an experimental study of 12 instance selection  methods for imbalanced data, using 66 standard benchmark data sets. The results reveal possible room for new instance selection methods for imbalanced data.
\end{abstract}

\begin{IEEEkeywords}
Imbalanced data; geometric mean (GM); instance/prototype selection; nearest neighbour; ensemble methods
\end{IEEEkeywords}}

\maketitle

\IEEEdisplaynontitleabstractindextext

\IEEEraisesectionheading{\section{Introduction}
\label{introduction}}
Class imbalance arises when the number of examples\footnote{We will use the terms `example', `instance', `object' and `prototype' interchangeably, meaning a data point in the feature space of interest, e.g., $\mathbf{x}\in \mathbb{R}^n$.} belonging to one class is much greater than the number of examples belonging to another~\cite{chawla2004}. Real-life data abounds with problems of this type, from fields including bioinformatics~\cite{Pedrajas2012}, medicine~\cite{krawczyk2016,eskildsen2014}, security~\cite{cieslak2006,tesfahun2013,dal2014, phua2004}, finance~\cite{sanz2015}, software development\cite{drown2009,seiffert2009,sun2012}, and satellite imaging~\cite{zheng2015}. The class of interest is usually the minority class, e.g., fraudulent transactions, mammogram lesions, credit defaults. Handling imbalanced data sets is difficult because traditional classifiers are designed to maximise a global measure of accuracy, which often results in the minority class being ignored~\cite{Visa2005}. 

The existing methods for classification of imbalanced data can be categorised as follows~\cite{Galar2012}:

\begin{itemize}
\item The {\em algorithm-level} category includes methods which are specially designed or modified to handle imbalanced data.

\item The {\em  data-level} category includes methods which transform the data into more balanced classes so that standard classification algorithms can then be used.

\item The {\em cost-sensitive} methods lie between the first two categories. Such methods are designed by introducing different misclassification costs for the classes and modifying the training algorithm accordingly. 

\item The final category, classifier ensembles, draws upon all three previous categories. Classifier ensembles have been widely acclaimed for their ability to work with imbalanced data~\cite{diez2015div}.
\end{itemize}

The underlying assumption in all categories is that classifier training algorithms work better when the classes are balanced, especially when the performance is measured by the geometric mean (GM) of the true positive rate and the true negative rate. To balance the classes, the data-level category resorts to two alternative approaches: undersampling the majority class and oversampling the minority class. Here we are interested in the effect of undersampling on GM. Many of the undersampling methods in the previous studies are based on instance selection methods (prototype selection, editing) for the nearest neighbour classifier (1-NN)~\cite{Fix52,Cover67}. This classifier has justly received recognition for its simplicity, accuracy and interpretability, and has been listed among the top ten algorithms in data mining~\cite{Wu08}. Perhaps more than one hundred data editing methods have been proposed for 1-NN since its conception, and continue to be proposed to this day~\cite{DROP3Wilson2000,Dasarathy90,garciaSalvador2012,Triguero12}. These methods aim at reducing the reference set for the 1-NN classifier without adversely affecting its classification accuracy. We find it curious that no such methods have yet been developed to maximise GM. 
The reduction of the size of the majority class is typically done with a view to balancing the prior probabilities for the classes, rather than selecting a reference set maximising GM. 

The classifiers trained on the reduced set are often decision trees~\cite{batista2004}, ensembles of decision trees~\cite{Galar2012,galar2013}, or the support vector machine classifier (SVM)~\cite{Akbani2004,Batuwita2010}, among others. Recent studies have shown, however, that prototype selection, generation or replacement followed by 1-NN classification rivals the more intricate state-of-the-art classification methods for imbalanced data~\cite{mani2003,barandela2003a,lopez2014,saez2016}. 

We note that the GM of the classification produced by 1-NN will not in general be the same as the GM of another classifier which uses the same reduced reference set. Nonetheless, we believe that the data-manipulation heuristics are crucial for the success of classification of imbalanced problems, and the 1-NN GM will offer valuable insights about such heuristics.

This paper offers a theoretical perspective on instance selection for imbalanced data in relation to the geometric mean (GM) as the performance measure. We prove the following facts, which, we believe, have not been formally proven thus far:
\begin{enumerate}
\item The GM value may increase  when instance selection is performed before classification, compared with a baseline in which classification is performed on the raw data. Section~\ref{gm_improve} contains the proof, and the conditions for the improvement. 
\item The maximum GM for a given data set is not necessarily a monotonic or a convex function of the number of retained instances.  This is demonstrated by an example in Section~\ref{gm_mono}.
\item Equalising the class frequencies (prior probabilities) is not guaranteed to lead to the optimal GM. The proof is given in Section~\ref{gm_priors}.
\item Assuming equal prior probabilities for the classes, the Bayes classifier which maximises the classification accuracy does not necessarily maximise GM. We give a proof using a counter example in Section~\ref{gm_bayes}.  
\end{enumerate}

Experiments illustrating our theoretical findings are presented in Section~\ref{experiment}. We compare 12 instance selection methods for imbalanced data on 66 benchmark data sets. The experiment offers further insights into the properties of instance selection methods.

\section{The Geometric Mean (GM) as a measure of classification performance}
\label{tgmmfic}

Consider a two-class problem where the class of interest is chosen to be the ``positive'' class ($\omega_+$) and the other class, typically larger, is the ``negative'' class ($\omega_-$). The confusion matrix for a given classifier which assigns the respective labels $+$ and $-$ is:

\begin{center}
\rule{2.7 cm}{0 cm}Assigned labels\\
\smallskip
True labels
\begin{tabular}{rccc}
&& $+$ & $-$\\
\cline{3-4}
&$+$&$A$&$B$\\
\cline{3-4}
&$-$&$C$&$D$\\
\cline{3-4}
&&&\\
\end{tabular}
\end{center}
\noindent
where $A$, $B$, $C$, and $D$ are numbers of instances from some testing data with $N_{\mbox{test}}$ instances ($N_{\mbox{test}}=A+B+C+D$). 
The GM measure is the geometric mean of the True Positive Rate ($\mathit{TPR} = A/(A+B)$) and the True Negative Rate ($\mathit{TNR} = D/(C+D)$):
\[
\mathit{GM} = \sqrt{\textit{TPR}\times \textit{TNR}} = \sqrt{\frac{AD}{(A+B)(C+D)}} 
\]

\subsection{Improvement on GM by instance selection}
\label{gm_improve}
Here we prove that, given a labelled data set, GM for the nearest neighbour classifier may increase when instances are removed from the reference set.

We can define a nearest neighbour classifier (1-NN) by the labelled reference set it uses. Equivalently, we can define it by the set of labelled Voronoi cells, $\mathcal{V}$, corresponding to the reference instances. This set is drawn from a collection of possible such sets, $\mathbb{V}$. For example, given a training data set with $R$ instances, we may choose to look for a subset of $M$ instances for use as the 1-NN reference set. This choice of size will define $\mathbb{V}$ with ${R \choose M}$ elements. For given $\mathcal{V} \in \mathbb{V}$, denote by $V_+$ the set of Voronoi cells with label $\omega_+$, and by $V_-$, the set of Voronoi cells with label $\omega_-$, so that $\mathcal{V} = V_+\cup V_-$.
Given knowledge of the class-conditional pdfs $p(\mathbf{x}|\omega_+)$ and $p(\mathbf{x}|\omega_-)$, the GM-optimal 1-NN classifier $\mathcal{V}^*$ using $M$ prototypes from the given data set is the one which maximises the asymptotic value of GM (that is, the limit of the GM of the classifier on test data, as the number of elements of the test data set goes to infinity):
\begin{equation}
\mathcal{V}^* = \arg\max_{\mathcal{V}\in\mathbb{V}}
\left(\int_{V_+}p(\mathbf{x}|\omega_+)dx\;
\int_{V_-}p(\mathbf{x}|\omega_-)dx\right).
\label{prodint}
\end{equation}

We shall demonstrate that it is possible to improve the {\em asymptotic} GM value by excluding prototypes from the reference set, and will give a condition under which removal of a given prototype is desirable. We build the proof in three stages. First, the Cell-inclusion lemma~\ref{CellInclusion} proves that if a prototype is removed from the reference set, its Voronoi cell is ``absorbed'' by the remaining Vornoi cells such that each remaining cell is  either unchanged or is expanded. Next, the Cell expansion lemma~\ref{CellExpansion} proves that the cells which expand after removal of a prototype are those which shared a non-trivial border with the cell of the removed prototype. Finally, proposition~\ref{propreduction} states the condition on the probability distributions over the 
removed cell which guarantees that the value of GM after removing the prototype is greater than the value using the original prototype set ${\cal V}$.

\begin{lemma} [Cell inclusion]
 \label{CellInclusion}
 Let $\mathcal{V}=\{V_1,\ldots,V_N\}$ be the set of Voronoi cells for the data set  $X=\{\mathbf{x}_1,\ldots,\mathbf{x}_N\} \subset \mathbb{R}^n$. Let $V_j\in \mathcal{V}$ be the Voronoi cell of $\mathbf{x}_j\in X$. Suppose that $\mathbf{x}_i$ is removed from $X$, $i\neq j$. Let $\mathcal{V}'$ be the set of Voronoi cells for the set $X'=X\setminus \{\mathbf{x}_i\}$, and $V_j' \in \mathcal{V}'$ be the new Voronoi cell of $\mathbf{x}_j\in X'$. Then
 \[
 V_j\subseteq V'_j\;.
 \]  
\end{lemma}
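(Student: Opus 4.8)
The plan is to reduce the statement to an elementary fact about intersections: a Voronoi cell is the intersection of the ``dominance'' regions of $\mathbf{x}_j$ over each competing prototype, and deleting a competitor merely removes one of these regions, so the cell can only grow. Concretely, for each index $k\neq j$ with $\mathbf{x}_k\in X$, I would set $H_{jk}=\{\mathbf{y}\in\mathbb{R}^n : \|\mathbf{y}-\mathbf{x}_j\|\le\|\mathbf{y}-\mathbf{x}_k\|\}$, the closed set of points at least as close to $\mathbf{x}_j$ as to $\mathbf{x}_k$; for the Euclidean metric this is the half-space on the $\mathbf{x}_j$-side of the perpendicular bisector of the segment $[\mathbf{x}_j,\mathbf{x}_k]$. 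By the definition of the nearest-neighbour assignment, $V_j=\bigcap_{k\neq j}H_{jk}$.

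Next I would pass from $X$ to $X'=X\setminus\{\mathbf{x}_i\}$ and observe that, because $i\neq j$ and $\mathbf{x}_j$ remains in $X'$, the only change to the constraint list is the disappearance of the single region indexed by $i$. Hence $V'_j=\bigcap_{k\neq j,\,k\neq i}H_{jk}$, and comparing the two intersections yields the exact identity $V_j=V'_j\cap H_{ji}$, from which the desired inclusion $V_j\subseteq V'_j$ is immediate.

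I do not expect a genuine obstacle here; the only point requiring care is the tie-breaking convention on cell boundaries. Because I define each $H_{jk}$ with the non-strict inequality, the decomposition $V_j=V'_j\cap H_{ji}$ is an exact set equality that remains valid on the boundaries, so no boundary point is lost in the passage to $X'$. The entire content of the argument therefore lies in writing this constraint decomposition precisely enough that removing $\mathbf{x}_i$ corresponds exactly to dropping the factor $H_{ji}$; once that is in place, the result is just monotonicity of intersection with respect to shrinking the index set.
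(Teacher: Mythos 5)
Your proof is correct and is essentially the paper's argument in different clothing: the paper argues pointwise that since $X' \subset X$, any point at least as close to $\mathbf{x}_j$ as to every other point of $X$ is also at least as close to $\mathbf{x}_j$ as to every other point of $X'$, which is exactly your monotonicity-of-intersection observation with the dominance factors $H_{jk}$ made explicit. The only minor difference is that your closed half-space formulation handles boundary ties explicitly, whereas the paper phrases the comparison with strict ``closer than''; this is a cosmetic refinement, not a different route.
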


\begin{proof}
$X' \subset X$.  Therefore, if a point is closer to $\mathbf{x}_j$ than to any other point in $X$, then it is closer to $\mathbf{x}_j$ than to any other point in $X'$.  So each element of $V_j$ is an element of $V_j'$.
\end{proof}

If $\mathbf{x}_i$ was not one of neighbours of $\mathbf{x}_j$ defining its Voronoi cell $V_j$, then $V'_j$ and $V_j$ are identical. If $\mathbf{x}_i$ was one of $\mathbf{x}_j$'s neighbours, then $V'_j$ will be a proper superset of $V_j$,  as we will now prove.

\begin{lemma} [Cell expansion]
	\label{CellExpansion}
    Let $\mathbf{x}_i$, $\mathbf{x}_j$, $X$, $X'$, $V_j$, $V_j'$ be as in Lemma \ref{CellInclusion}, with the Voronoi cells being defined using Euclidean distance.  Let $L$ be the Lebesgue measure on $\mathbb{R}^{n-1}$.  Let 
    \[
    B = \{\mathbf{x}: \|\mathbf{x} - \mathbf{x}_i\| = \|\mathbf{x} - \mathbf{x}_j\| \leq \|\mathbf{x} - \mathbf{x}_k\| \;\forall k \neq i,j\}
    \]
    be the set of points constituting the common border of the Voronoi cells of $\mathbf{x}_i$ and $\mathbf{x}_j$. Then
    \begin{equation}
        L(B)>0 \Rightarrow V_j \subset V_j'.
    \end{equation}
    That is, all neighbouring cells whose common border with $V_i$ is a true facet will expand when $\mathbf{x}_i$ is removed to form $X'$.
\end{lemma}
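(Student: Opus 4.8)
The plan is to lean on the Cell inclusion lemma, which already supplies $V_j \subseteq V_j'$, and then upgrade this to proper inclusion by exhibiting a single point lying in $V_j'$ but not in $V_j$. First I would write both cells as intersections of closed half-spaces, $V_j = \{\mathbf{x} : \|\mathbf{x}-\mathbf{x}_j\| \le \|\mathbf{x}-\mathbf{x}_k\|\;\forall k \neq j\}$ and $V_j' = \{\mathbf{x} : \|\mathbf{x}-\mathbf{x}_j\| \le \|\mathbf{x}-\mathbf{x}_k\|\;\forall k \neq i,j\}$. The two descriptions differ only in the constraint $\|\mathbf{x}-\mathbf{x}_j\| \le \|\mathbf{x}-\mathbf{x}_i\|$, which is imposed for $V_j$ but dropped for $V_j'$. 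Hence a point witnesses $V_j \subsetneq V_j'$ precisely when it is at least as close to $\mathbf{x}_j$ as to every $\mathbf{x}_k$ with $k \neq i,j$, yet strictly closer to $\mathbf{x}_i$ than to $\mathbf{x}_j$.

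Next I would use the hypothesis $L(B)>0$ to locate a convenient point in the relative interior of the shared facet $B$. Writing $H_{ij}$ for the perpendicular bisector hyperplane of $\mathbf{x}_i$ and $\mathbf{x}_j$, the set $B$ is the intersection of $H_{ij}$ with the half-spaces $\{\mathbf{x}: \|\mathbf{x}-\mathbf{x}_i\| \le \|\mathbf{x}-\mathbf{x}_k\|\}$, $k \neq i,j$, a convex region inside the $(n-1)$-dimensional flat $H_{ij}$. Any point of $B$ at which some further constraint is tight, i.e. $\|\mathbf{x}-\mathbf{x}_i\| = \|\mathbf{x}-\mathbf{x}_k\|$ for some $k$, lies in $H_{ij} \cap H_{ik}$, the intersection of two distinct hyperplanes and hence an affine subspace of dimension at most $n-2$, of $L$-measure zero. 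As there are only finitely many such $k$, the union of these exceptional sets still has measure zero, so $L(B)>0$ forces the existence of a point $\mathbf{b} \in B$ with $\|\mathbf{b}-\mathbf{x}_i\| = \|\mathbf{b}-\mathbf{x}_j\| < \|\mathbf{b}-\mathbf{x}_k\|$ for all $k \neq i,j$. This strict separation from the remaining prototypes is exactly the slack needed for the final perturbation, and I expect securing it to be the crux of the argument: it is the one place where the facet hypothesis $L(B)>0$ is genuinely used, and it is the step requiring the most care.

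Finally I would push $\mathbf{b}$ a small distance across the bisector toward $\mathbf{x}_i$. Setting $\mathbf{u} = (\mathbf{x}_i - \mathbf{x}_j)/\|\mathbf{x}_i-\mathbf{x}_j\|$ and $\mathbf{b}_\varepsilon = \mathbf{b} + \varepsilon \mathbf{u}$, a direct expansion of the squared norms gives $\|\mathbf{b}_\varepsilon - \mathbf{x}_i\|^2 - \|\mathbf{b}_\varepsilon - \mathbf{x}_j\|^2 = -2\varepsilon\|\mathbf{x}_i - \mathbf{x}_j\| < 0$ for every $\varepsilon > 0$, so $\mathbf{b}_\varepsilon$ is strictly closer to $\mathbf{x}_i$ than to $\mathbf{x}_j$ and hence $\mathbf{b}_\varepsilon \notin V_j$. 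On the other hand, the finitely many strict inequalities $\|\mathbf{b}-\mathbf{x}_j\| < \|\mathbf{b}-\mathbf{x}_k\|$ persist under a sufficiently small perturbation by continuity of the distance functions, so for small enough $\varepsilon$ one still has $\|\mathbf{b}_\varepsilon - \mathbf{x}_j\| \le \|\mathbf{b}_\varepsilon - \mathbf{x}_k\|$ for all $k \neq i,j$, giving $\mathbf{b}_\varepsilon \in V_j'$. Thus $\mathbf{b}_\varepsilon \in V_j' \setminus V_j$, which together with the inclusion from Lemma~\ref{CellInclusion} yields $V_j \subset V_j'$ as claimed; indeed, applying the same continuity argument to a whole neighbourhood of $\mathbf{b}_\varepsilon$ shows the region gained has positive volume, not merely that it is nonempty, which is the form of the conclusion the later GM analysis will want.
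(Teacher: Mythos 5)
Your proposal is correct and follows essentially the same route as the paper's proof: both use the measure-zero argument (equidistance from a third prototype $\mathbf{x}_k$ confines a border point to an $(n-2)$-dimensional set, and finitely many such sets cannot cover a set of positive $L$-measure) to extract a border point $\mathbf{b}$ with strict inequalities $\|\mathbf{b}-\mathbf{x}_j\| < \|\mathbf{b}-\mathbf{x}_k\|$ for all $k \neq i,j$, and then move slightly off the bisector into $V_i$'s side to obtain a witness in $V_j' \setminus V_j$. Your explicit perturbation $\mathbf{b}_\varepsilon = \mathbf{b} + \varepsilon(\mathbf{x}_i-\mathbf{x}_j)/\|\mathbf{x}_i-\mathbf{x}_j\|$ merely makes concrete the continuity appeal that the paper states informally, so the two proofs are the same in substance.
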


\begin{proof}
    Define an extremal border point with respect to $k$ to be a point $\mathbf{x}$ such that
    \begin{equation}
        \|\mathbf{x} - \mathbf{x}_i\| = \|\mathbf{x} - \mathbf{x}_j\| = \|\mathbf{x} - \mathbf{x}_k\|
    \end{equation}
    for some $k \neq i,j$.  Then for each $k \neq i,j$, the extremal border points with respect to that $k$, if any exist, lie on the intersection of two hyperplanes of dimension $n-1$, these being the hyperplane of points equidistant from $\mathbf{x}_i$ and $\mathbf{x}_j$, and the hyperplane of points equidistant from $\mathbf{x}_i$ and $\mathbf{x}_k$.  The extremal border points w.r.t. $k$ are therefore a subset of a ($n-2$)-dimensional space.  They are therefore a set of measure zero, using the Lebesgue measure on $\mathbb{R}^{n-1}$.  The union of all $N-2$ such sets is therefore also a set of measure zero.
    
    Therefore, if $L(B)>0$, then $B$ must contain points which are not extremal border points w.r.t. any $k$.  For these points, and, by the continuity of the Euclidean distance function, for nearby points in $V_i \setminus B$, $\|\mathbf{x} - \mathbf{x}_j\| < \|\mathbf{x} - \mathbf{x}_k\| \forall k \neq i,j$.
    
    Thus, $\exists \mathbf{x} \in V_i \setminus B$ such that $\mathbf{x} \in V_j'$.  This, combined with the result $V_j \subseteq V_j'$ from Lemma~\ref{CellInclusion} above, implies the result $V_j \subset V_j'$. 
\end{proof}

These lemmata are illustrated for the two-dimensional case in Figure~\ref{lemmafig}. The left subplot shows the original Voronoi diagram $\mathcal{V}$ of a set $X$. The cell to be removed is highlighted. The right subplot shows the expanded cells in  $\mathcal{V}'$ resulting from the removal. 

Suppose that an instance $\mathbf{x}_i$ from the positive class is removed from $X$. Suppose that $\mathcal{N}_x=\{\mathbf{x}_1,\ldots,\mathbf{x}_k\}$ is the set of all the neighbours of the removed point $\mathbf{x}_i$, i.e. those points having a Voronoi facet in common with $\mathbf{x}_i$. By the Cell expansion lemma, Vononoi cell corresponding to each of these neighbours will expand, that is, for each $\mathbf{x}_j\in \mathcal{N}_x$, we have $V_j\subset V'_j$. Due to the strict inclusion, there exists a region $R_j\subset V'_j$ such that $R_j\not\subset V_j$. The union of all regions $R_j$ will make up $V_i$. 

\begin{figure}[htb]
\centering
\includegraphics[width=1\linewidth]{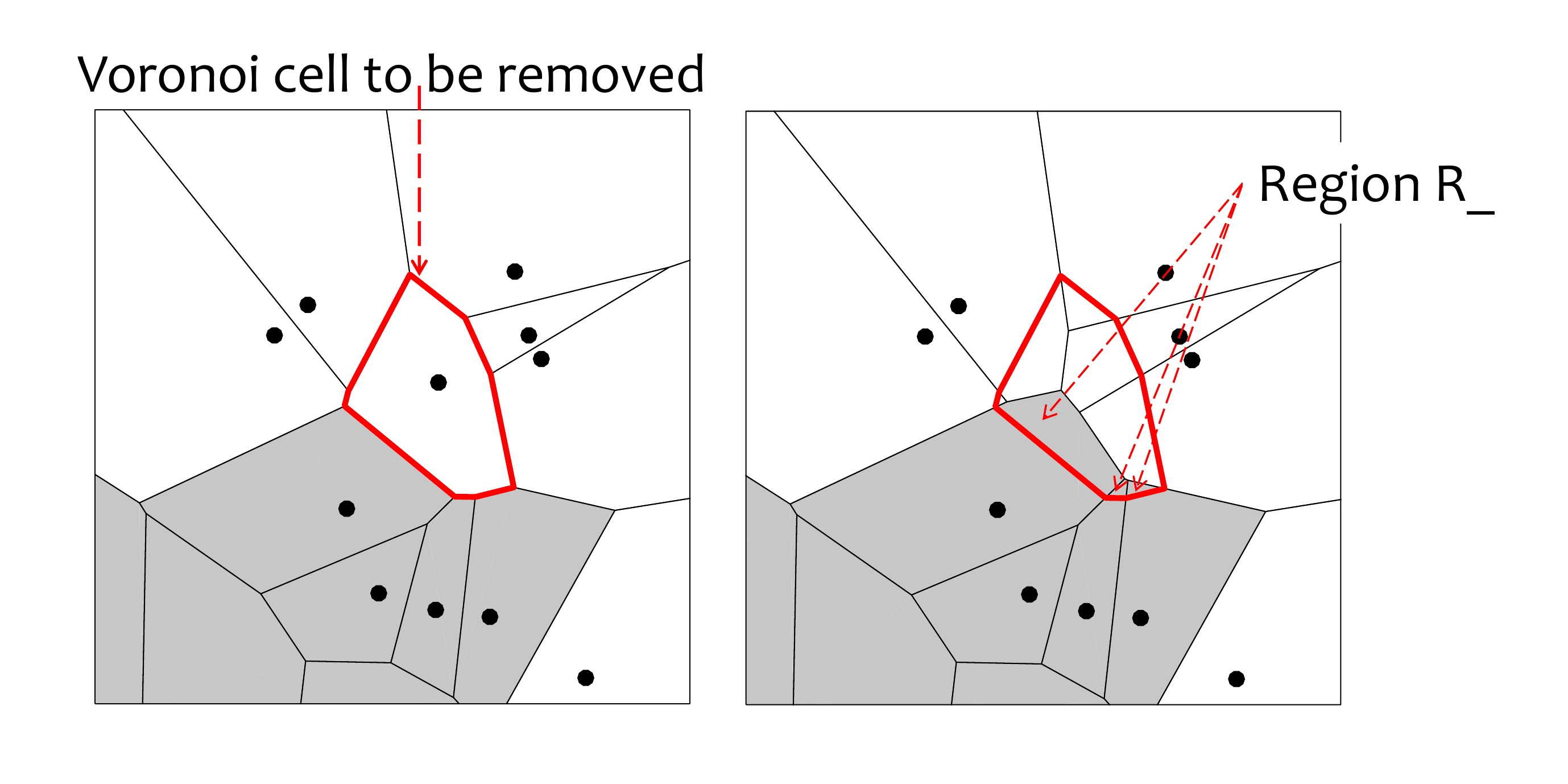}
\caption{Illustration of the region $R_-$ when the outlined Voronoi cell from class positive (white) is removed.}
\label{lemmafig}
\end{figure}

Let the labels of the points in $\mathcal{N}_x$ be $\{y_1,\ldots,y_k\}$, $y_j\in\{\omega_+,\omega_-\}$. If there is an $\mathbf{x}_j \in \mathcal{N}_x$ with $y_j = \omega_-$, then the negative classification region will {\em gain} volume through expanding the Voronoi cells in $N_x$ whose label is $\omega_-$. The addition to the negative region is
\[
R_- = \bigcup_{y_j=\omega_-} R_j.
\]
The positive classification  region will {\em lose} the same volume. The right subplot in Figure~\ref{lemmafig} illustrates $R_-$ in the 2D space. 

While the volume removed from the classification region of one class is equal to that added to the classification region of the other class, the associated probability masses will not in general be equal. Let $g_-$ be the gain of true negative rate ($\textit{TNR}$) acquired from region $R_-$.
\begin{equation}
\label{gminus}
g_- = \int_{R_-} p(\mathbf{x}|\omega_-) dx\;.
\end{equation}
Similarly, let $l_+$ be the loss of true positive rate ($\textit{TPR}$) incurred by removing $R_-$ from the positive classification region.
\begin{equation}
\label{lplus}
l_+ = \int_{R_-} p(\mathbf{x}|\omega_+) dx\;.
\end{equation}
Let $\textit{TPR}$ and $\textit{TNR}$ be the rates calculated on $\mathcal{V}$ (before the removal of $\mathbf{x}_i$).

\begin{proposition} [GM improvement through data reduction]
\label{propreduction}
Let $\mathcal{V}, \mathcal{V}',$ and $X$ be as in Lemma~\ref{CellInclusion}, and suppose further that the instance $\mathbf{x}_i$ to be removed is of the positive class. Let $\textit{TPR}$ and $\textit{TNR}$ be respectively the asymptotic True Positive Rate and True Negative Rate of a 1-NN classifier defined by $\mathcal{V}$.  Let $l_+$ and $g_-$ be respectively the loss in $\textit{TPR}$ and gain in $\textit{TNR}$ associated with the removal $\mathbf{x}_i$, defined in equations \ref{gminus} and \ref{lplus} above. Then if 
\begin{equation}
\frac{(\textit{TPR}-l_+)(\textit{TNR}+g_-)}{\textit{TPR}\times\textit{TNR}} > 1\;,
\label{propo}
\end{equation}
then the geometric mean criterion (GM) of the 1-NN classifier defined by $\mathcal{V}'$ is greater than that of the classifier defined by $\mathcal{V}$:
\[
GM(\mathcal{V}) < GM(\mathcal{V}')\;. 
\]
That is, the GM score of the classifier increases after the removal of $\mathbf{x}_i$.
\end{proposition}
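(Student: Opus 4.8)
The plan is to reduce the claim to a direct algebraic comparison of the two geometric means, once the asymptotic rates have been written as integrals against the class-conditional densities. First I would express the asymptotic $\textit{TPR}$ and $\textit{TNR}$ of the classifier $\mathcal{V}$ as integrals over its positive and negative classification regions: writing $U_+$ for the union of the Voronoi cells labelled $\omega_+$ and $U_-$ for the union of those labelled $\omega_-$, the asymptotic rates are $\textit{TPR}=\int_{U_+}p(\mathbf{x}|\omega_+)d\mathbf{x}$ and $\textit{TNR}=\int_{U_-}p(\mathbf{x}|\omega_-)d\mathbf{x}$. This is the same limiting quantity that appears in the optimisation \eqref{prodint}, and it follows from the law of large numbers applied to the test sample as its size tends to infinity.

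The key structural step is to pin down exactly which part of the feature space changes classification when $\mathbf{x}_i$ is removed. By the Cell-inclusion lemma~\ref{CellInclusion} and the Cell-expansion lemma~\ref{CellExpansion}, the cell $V_i$ is partitioned among the neighbours in $\mathcal{N}_x$, each absorbing a region $R_j$, and every neighbouring cell sharing a true facet with $V_i$ strictly expands. I would then observe that a region $R_j$ absorbed by a positive neighbour retains its positive label, and so contributes no change to either rate; the only region whose assigned label flips is $R_-=\bigcup_{y_j=\omega_-}R_j$, which passes out of the positive classification region and into the negative one. Hence the updated rates for $\mathcal{V}'$ are exactly $\textit{TPR}'=\textit{TPR}-l_+$ and $\textit{TNR}'=\textit{TNR}+g_-$, with $l_+$ and $g_-$ as defined in \eqref{lplus} and \eqref{gminus}: the positive probability mass over $R_-$ is lost from $\textit{TPR}$, while the negative mass over $R_-$, previously misclassified as positive, is now gained by $\textit{TNR}$.

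The final step is then immediate. Since $GM(\mathcal{V}')=\sqrt{(\textit{TPR}-l_+)(\textit{TNR}+g_-)}$ and $GM(\mathcal{V})=\sqrt{\textit{TPR}\times\textit{TNR}}$ are both nonnegative, and squaring preserves order on the nonnegative reals, the hypothesis \eqref{propo} that the ratio of the squared geometric means exceeds $1$ is equivalent to $GM(\mathcal{V}')^2>GM(\mathcal{V})^2$, and therefore to $GM(\mathcal{V})<GM(\mathcal{V}')$, as required.

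I expect the only real obstacle to be the bookkeeping in the second step: carefully justifying that the positively-absorbed regions $R_j$ contribute nothing to the change in the rates, and that $R_-$ is precisely the flipped region, so that the two integrals $l_+$ and $g_-$ account for the \emph{entire} change in $\textit{TPR}$ and $\textit{TNR}$. Everything else is either a direct appeal to the preceding lemmata or elementary algebra on nonnegative quantities.
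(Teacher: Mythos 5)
Your proposal is correct and follows essentially the same route as the paper's own proof: both identify $R_-$ as the only region whose label flips, write the new rates as $\textit{TPR}-l_+$ and $\textit{TNR}+g_-$ by splitting the integrals over $V_+\setminus R_-$ and $V_-\cup R_-$, and conclude by comparing the squared geometric means. If anything, you are slightly more explicit than the paper about the bookkeeping step (that regions absorbed by positive neighbours leave both rates unchanged), which the paper handles implicitly by writing the new regions directly as $V_+\setminus R_-$ and $V_-\cup R_-$.
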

\begin{proof}
Consider the difference between the squares of the two geometric means
\[
\mathit{GM}(\mathcal{V})^2 - \mathit{GM}(\mathcal{V}')^2 
\rule{9cm}{0 cm} 
\]
\begin{eqnarray}
&=&\int_{V_+}p(\mathbf{x}|\omega_+)dx
\int_{V_-}p(\mathbf{x}|\omega_-)dx\nonumber \\
&&-\int_{V_+\setminus R_-}p(\mathbf{x}|\omega_+)dx
\int_{V_-\cup R_-}p(\mathbf{x}|\omega_-)dx\nonumber\\
&=& \textit{TPR}\times \textit{TNR}- \left(\int_{V_+}p(\mathbf{x}|\omega_+)dx-\int_{R_-}p(\mathbf{x}|\omega_+)dx\right)\nonumber\\
&&\hspace{1.5 cm}\times \left(\int_{V_-}p(\mathbf{x}|\omega_-)dx + \int_{R_-}p(\mathbf{x}|\omega_-)dx \right)\nonumber\\
&=&\textit{TPR}\times \textit{TNR}-(\textit{TPR}-l_+)(\textit{TNR}+g_-)\;\nonumber
\end{eqnarray}
Then the inequality (\ref{propo}) implies that the RHS of the above is negative, which implies that the LHS is negative, which implies the result. 
\end{proof}

The importance of this proposition is twofold. First, it gives theoretical ground for removing prototypes from the reference set for the purpose of improving the GM criterion. Second, the result indicates that, given knowledge of the $\textit{TPR}$ and $\textit{TNR}$ of a classifier, it is enough to evaluate only the gain of true negative rate  $g_-$ and the  loss of true positive rate $l_+$ over a sub-region of the removed Voronoi cell $V_i$ in order to calculate the criterion value for the new Voronoi tessellation.

An equivalent result holds if a prototype $\mathbf{x}_i$ from the negative class is removed. In this case, $\mathit{GM}(\mathcal{V}) < \mathit{GM}(\mathcal{V}')$ when
\[
\frac{(\textit{TPR}+g_+)(\textit{TNR}-l_-)}{\textit{TPR}\times\textit{TNR}} > 1\;,
\]
where
\[
R_+ = \bigcup_{y_j=\omega_+} R_j
\]
is the added region for $\omega_+$,
\[
g_+ = \int_{R_+} p(\mathbf{x}|\omega_+) dx
\]
is the gain of probability mass for the positive class obtained from the expansion of the positive-labelled neighbours of the removed $V_i$, and
\[
l_- = \int_{R_+} p(\mathbf{x}|\omega_-) dx
\]
is the loss for the negative class.

\subsection{Sub-optimality and non-monotonicity of sequential instance selection}
\label{gm_mono}

Suppose that we remove prototypes one by one,  in such a way as to increase GM at each step. Unfortunately, this is not guaranteed to lead to an optimal reference subset, because the asymptotic GM may vary non-monotonically as prototypes are removed, giving rise to the risk of finding a local maximum. 
A comparison with the problem of feature selection may be helpful for illustrating this point. The literature on feature selection abounds with procedures for looking for an optimal feature subset through sequential forward or backward selection, floating search, genetic algorithms and more~\cite{Pudil94,Jain97a,Saeys07}. These approaches to feature selection rely on the assumption that every feature has its own individual merit in the classification context, which can be amplified by combining this feature with other features. However, with instance selection, this assumption is no longer true. The value of an instance is very strongly dependent on which other instances are in the reference set. This discourages instance selection procedures based on sequential search, in favour of randomised ones. The only way to guarantee that an optimal reference set has been selected is exhaustive search among all possible subsets of instances.

The following example demonstrates the non-monotonicity 
of GM with respect to the number of prototypes. 

The data in the example consists of $15$ two-dimensional points, shown in Figure~\ref{E3}(b), partitioned into two classes according to the ground-truth labelling of the space shown in Figure~\ref{E3}(a). The GM values are shown underneath the plots. An exhaustive search was performed, in which all possible subsets of instances were evaluated with respect to the ground truth. The best set is shown in plot (c).

\begin{figure}[htb]
\centering
\begin{tabular}{ccc}
\includegraphics[width=0.28\linewidth]{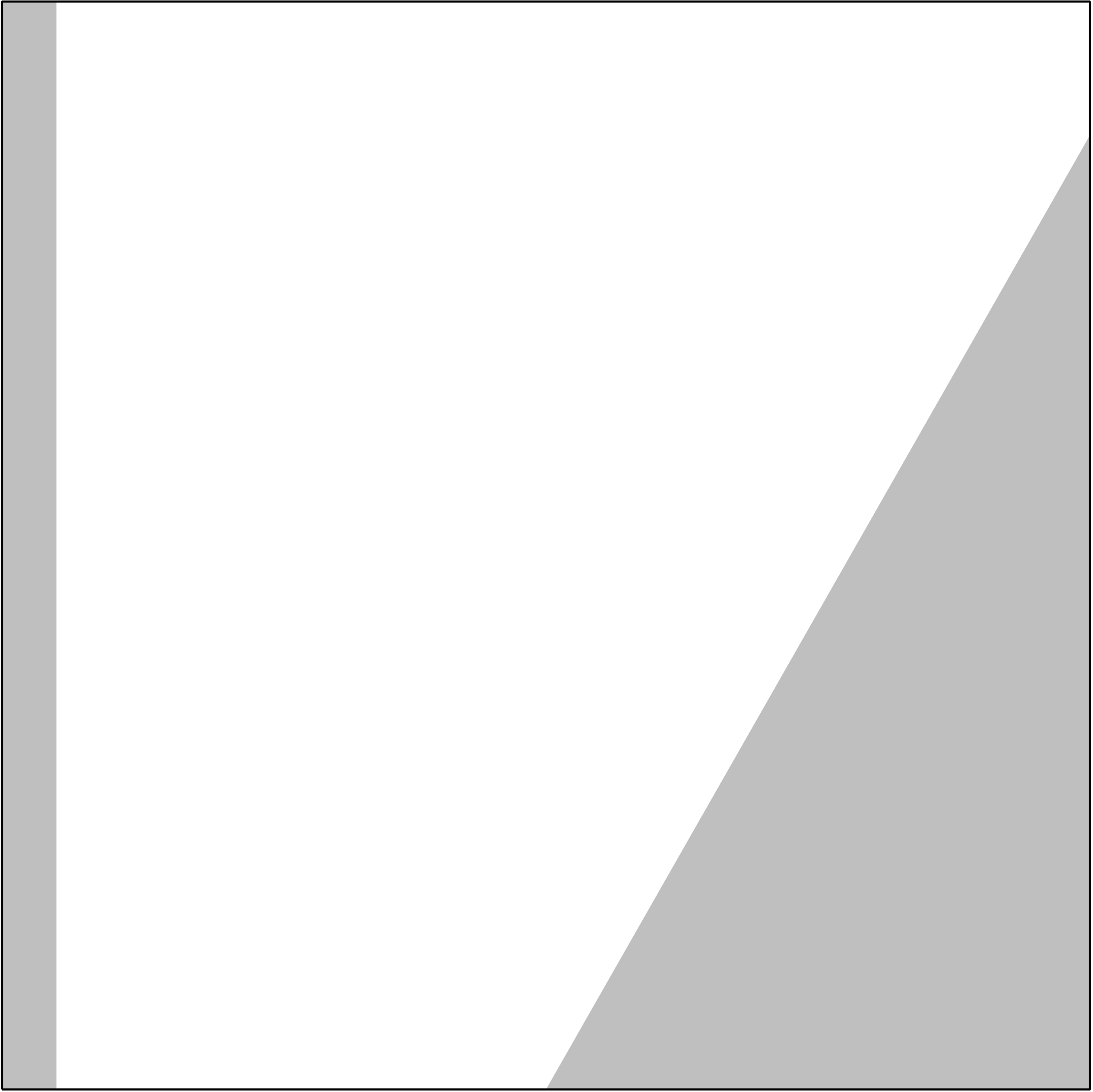}&
\includegraphics[width=0.28\linewidth]{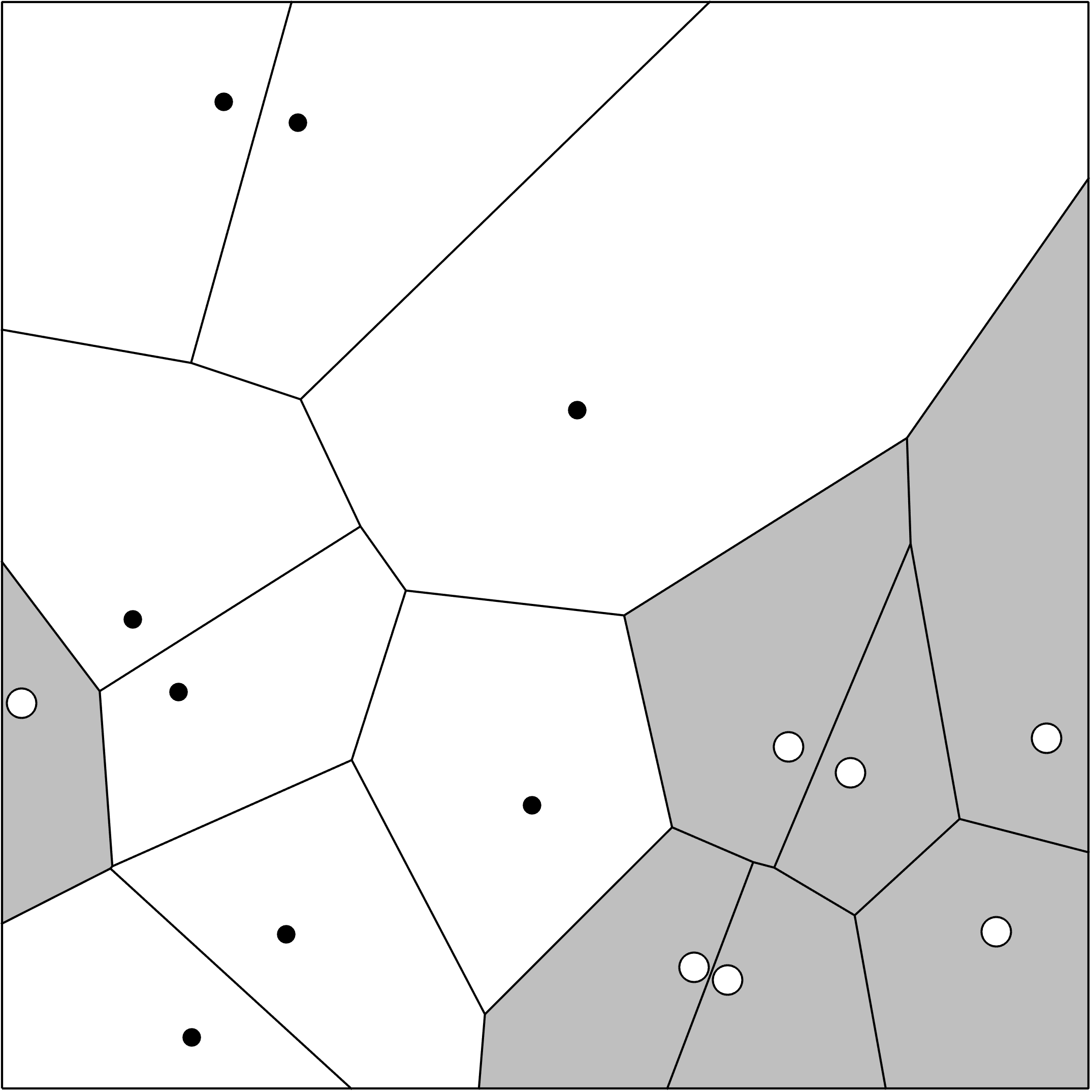}&
\includegraphics[width=0.28\linewidth]{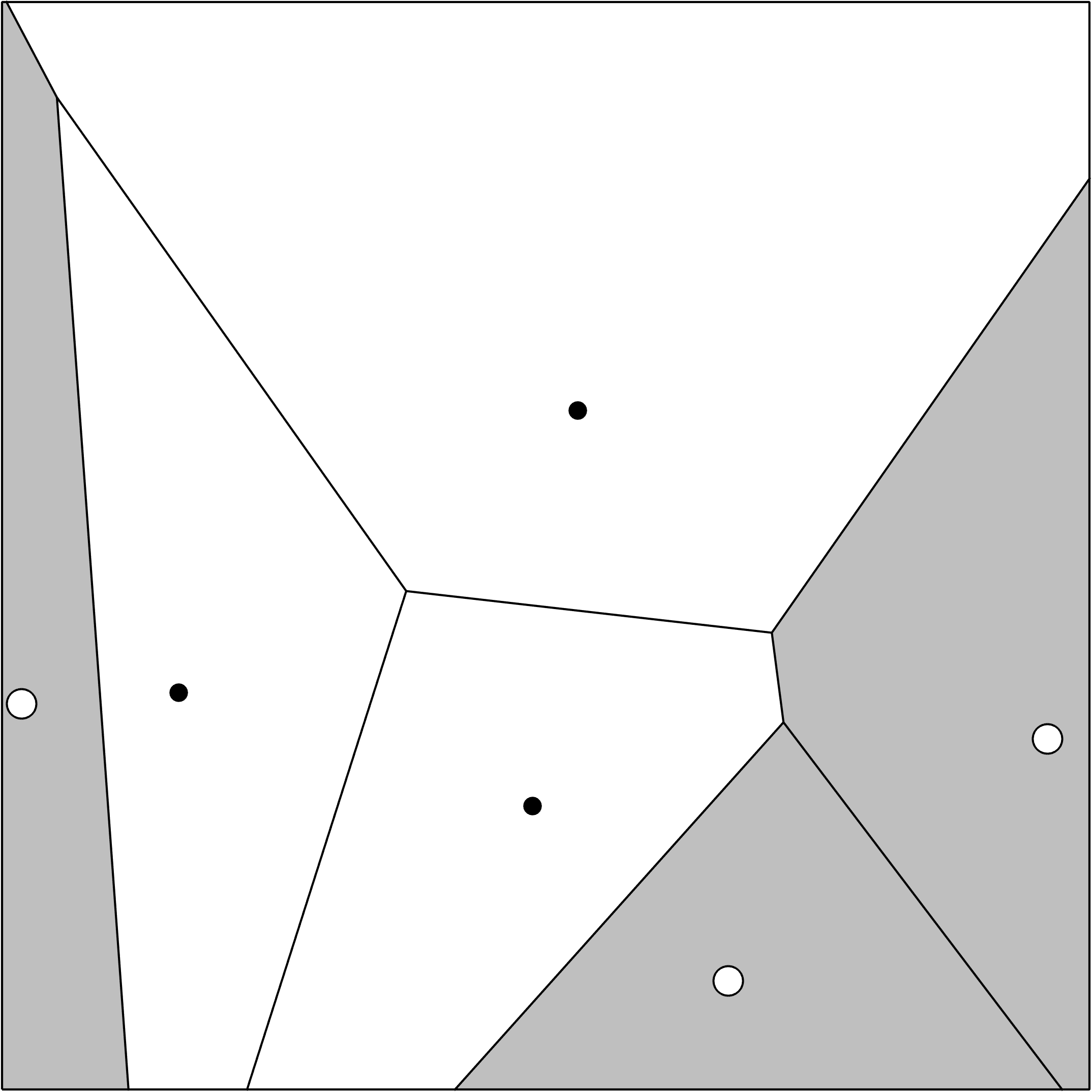}\\
\small{(a) Ground truth}&\small{(b) Data}&
\small{(c) Exhaustive}\\
$\mathit{GM} = 1.0000$&$\mathit{GM} = 0.8892$&$\mathit{GM} = 0.9549$\\
\end{tabular}
\caption{The the ground truth (a), data set (b), and the best results from applying the exhaustive search (c).}
\label{E3}
\end{figure}


Figure~\ref{Example3GM} shows the GM value of the best-performing reference set of each size.

\begin{figure}[htb]
\centering
\includegraphics[width=0.8\linewidth]{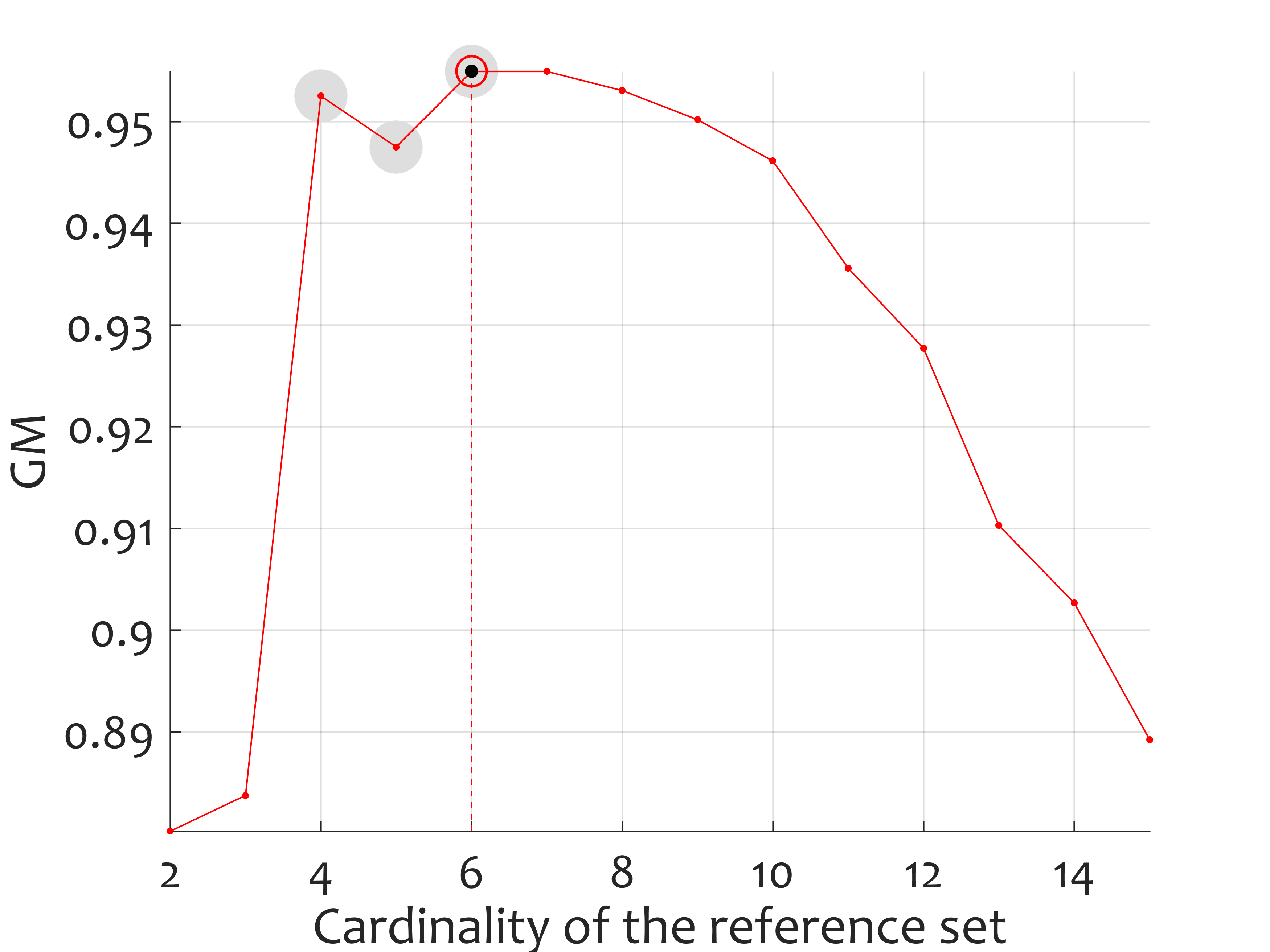}
\caption{Geometric mean (GM) value of the optimal reference set for different cardinality of the reference set for the example in Figure~\ref{E3}. The points of non-monotonicity are indicated with grey circles.}
\label{Example3GM}
\end{figure}

The example demonstrates the following points:
\begin{itemize}
    \item Given a data set, using fewer instances can be better than using all instances (an illustration of the result of Proposition~\ref{propreduction}).
    \item While GM is an approximately concave function of the number of prototypes in the reference set, the function is not necessarily monotonic in the mostly ascending or mostly descending parts.
\end{itemize}
The second point suggests that the only method which guarantees that the best reference set is selected is an exhaustive search: even the best optimisers can be foiled by such quirky behaviour of the target function.

\subsection{Equal prior probabilities do not guarantee a GM-optimal classifier}
\label{gm_priors}
It is tempting to assume that the optimal classifier is the one that assigns the class labels based on the largest class-conditional pdf (not scaled by the respective prior probabilities). Here we show that this approach does not guarantee GM-optimality.

Figure~\ref{OptimalGM} shows the class-conditional pdfs for the two classes in a one-dimensional problem:
\[
p(x|\omega_+) = \left\{\begin{array}{ll}
\frac{1}{9},&\mbox{ if } 0\leq x \leq 9\\
0,&\mbox{ elsewhere. }\end{array}\right.
\]
and
\[
p(x|\omega_-) = \left\{\begin{array}{ll}
\frac{1}{7},&\mbox{ if } 3\leq x \leq 10\\
0,&\mbox{ elsewhere. }\end{array}\right.
\]

\begin{figure}[htb]
\centering
\includegraphics[width=0.9\linewidth]{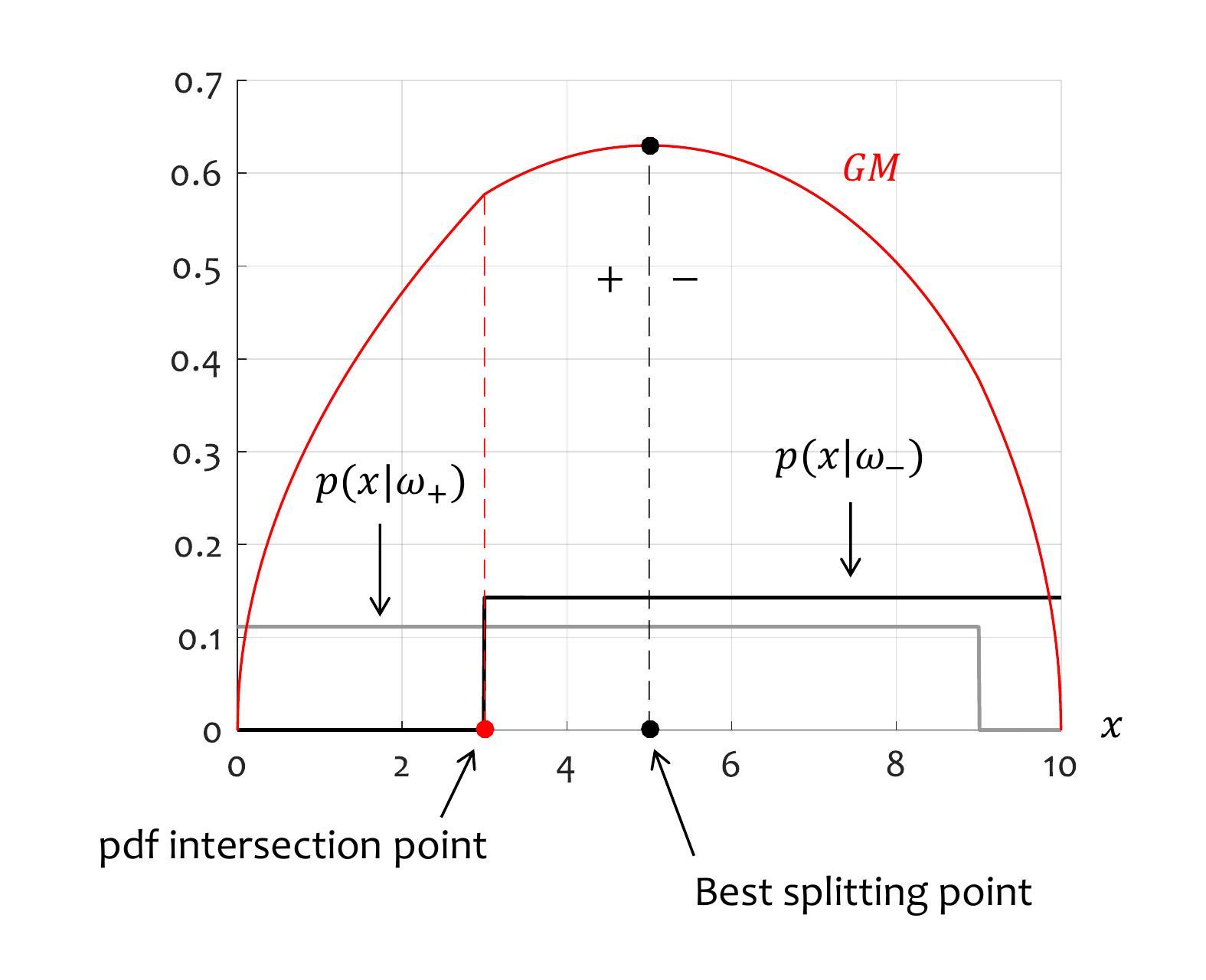}
\caption{An example for which the GM measure takes its maximum value for a decision boundary at $x=5$, and not at the boundary $x=3$ defined by the intersection of the two pdfs $p(x|\omega_+)$ and $p(x|\omega_-)$.}
\label{OptimalGM}
\end{figure}

As the pdfs are non-zero only in the interval $[0,10]$, only this interval is shown in the figure. Suppose that we slide a possible classification boundary $b$ from $0$ up to $10$ and calculate the corresponding $\mathit{TPR}(b)$, $\mathit{TNR}(b)$
\[
\mathit{TPR}(b) = \left\{\begin{array}{ll}
\frac{b}{9},&\mbox{ if } b \in [0,9]\\
1,&\mbox{ if } b \in (9,10] \end{array}\right.
\]
\[
\mathit{TNR}(b) = \left\{\begin{array}{ll}
1,&\mbox{ if } b \in [0,3)\\
\frac{10-b}{7},&\mbox{ if } b \in [3,10]
 \end{array}\right.,
\]
and, subsequently, GM: 
\[
\mathit{GM}(b) = \left\{\begin{array}{ll}
\sqrt{\frac{b}{9}},&\mbox{ if } 0\leq x \leq 3\\
\sqrt{\frac{b(10-b)}{63}},&\mbox{ if } 3 < x \leq 9\\
\sqrt{\frac{10-b}{7}},&\mbox{ if } 9 < x \leq 10
\end{array}\right.
\]

GM is shown in Figure~\ref{OptimalGM} as a function of the boundary. The point of intersection of the two pdfs is $x = 3$ (marked in the figure). The GM value for this boundary is $\mathit{GM}= \sqrt{3/9}\approx 0.5774$. There is a higher value of GM, however. It is easy to show that the maximum of GM is reached for $b=5$ and equals $\sqrt{5(10-5)/63}\approx 0.6299$. The optimal boundary is indicated in the figure (called the Best splitting point). 

This example shows that the GM-optimal classifier cannot in general be formed in the simple way that intuition might suggest -- labelling the data points according to the larger $p(\omega_k|\x)$, $k\in\{+,-\}$. 

There are cases, nonetheless, where GM is indeed maximised at the intersection of the pdfs. Two such examples are shown in Figure~\ref{MoreOptimalGM}. 

\begin{figure}[htb]
\centering
\begin{tabular}{cc}
\includegraphics[width=0.45\linewidth]{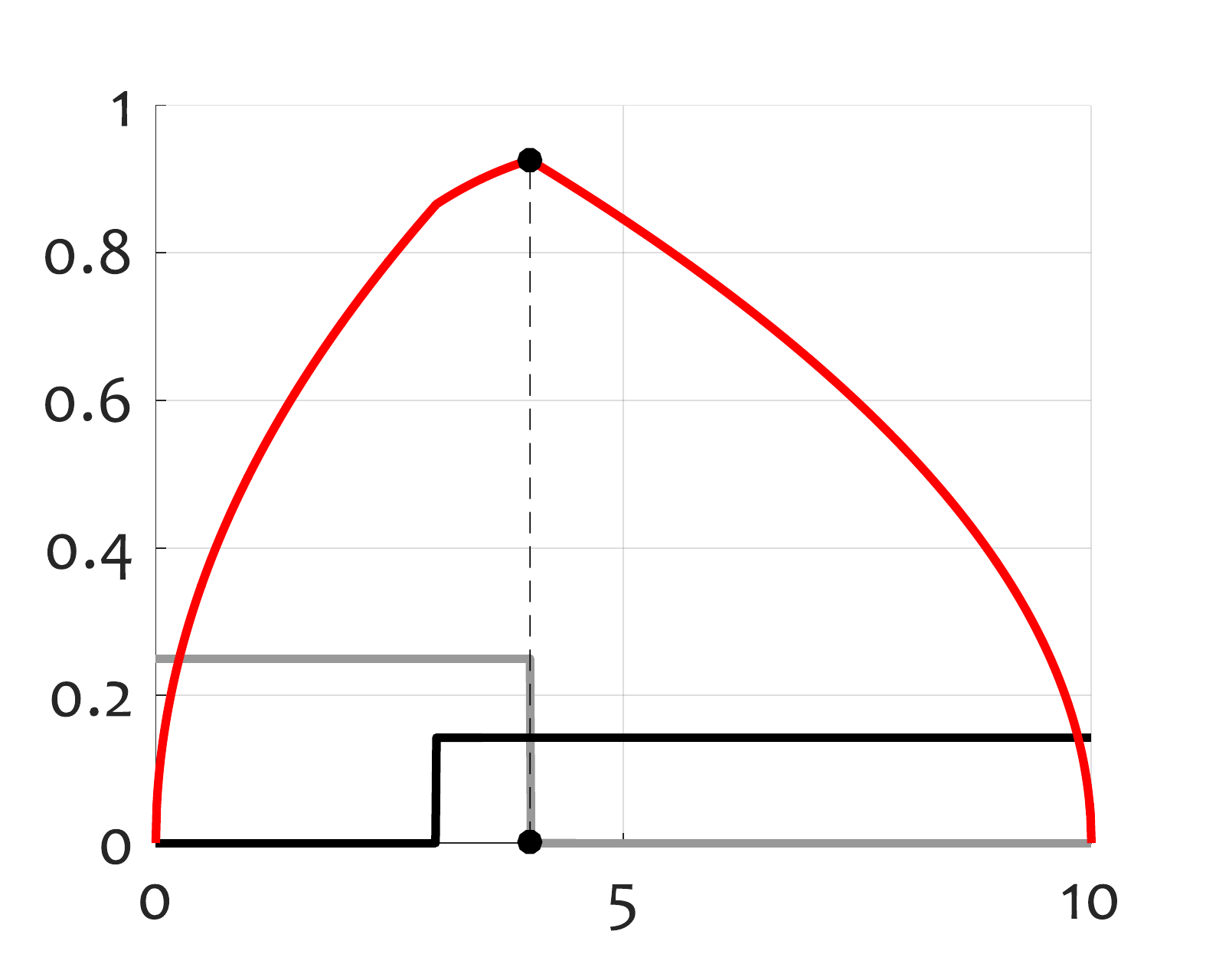}&
\includegraphics[width=0.45\linewidth]{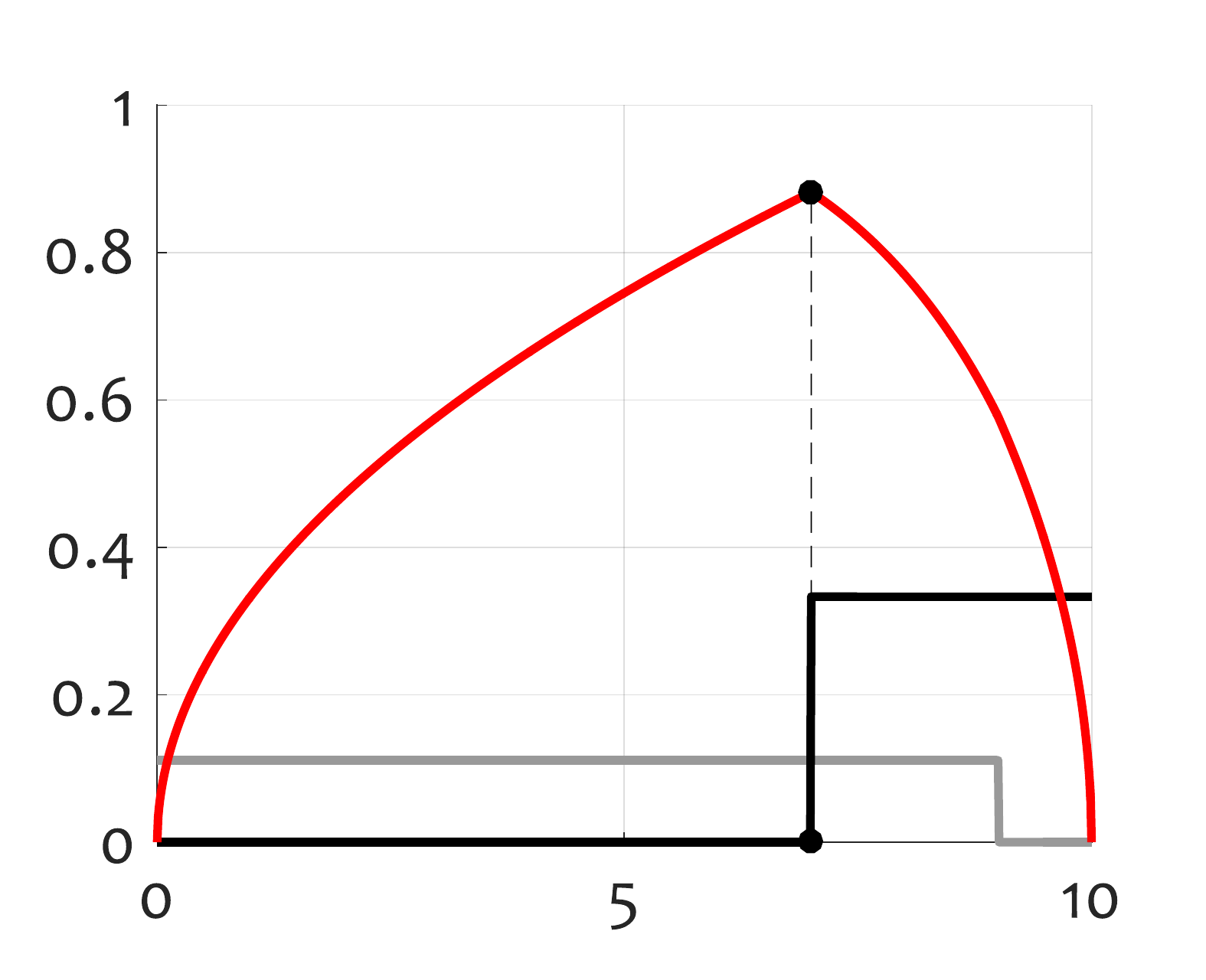}\\
\end{tabular}
\caption{Two examples where the GM (red line) does take its maximum at the intersection of the pdfs (grey line for $p(x|\omega_+)$ and black line for $p(x|\omega_-)$). }
\label{MoreOptimalGM}
\end{figure}

\subsection{Optimality and non-optimality of the Bayes classifier with respect to GM}
\label{gm_bayes}
When the class-conditional pdfs are scaled by the prior probabilities, and the intersection is chosen as the classification boundary, we arrive at the {\em Bayes classifier}. Denote this classifier by $\mathit{CB}$, for ``classical Bayes'' classifier. Consider a classifier which assigns the class labels based on the unscaled class-conditional pdfs, as in section \ref{gm_priors}, called the {\em Balanced Bayes classifier} and denoted by $\mathit{BB}$. Finally, let $\mathit{GM}^*$ be the GM-optimal classifier.

This subsection looks into the relationship between $\mathit{CB}$, $\mathit{BB}$ and $\mathit{GM}^*$. The implication of these results is practical. If GM is the chosen measure of quality for imbalanced classes, the standard algorithms for training classifiers are likely to give inferior results compared to bespoke ones.

\begin{proposition}[GM-optimality of $\mathit{CB}$ and $\mathit{BB}$] 
If the Bayes classification error for the problem is zero, the Bayes classifier and the Balanced Bayes classifier are optimal in the sense of GM  for any prior probabilities $P(\omega_+)$ and $P(\omega_-)\equiv 1- P(\omega_+)$ .
\end{proposition}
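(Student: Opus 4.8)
The plan is to show that the zero-error hypothesis forces both $\mathit{CB}$ and $\mathit{BB}$ to attain the largest possible value $\mathit{GM}=1$, which is by definition GM-optimal. The starting observation is that, since $\mathit{TPR},\mathit{TNR}\in[0,1]$, we always have $\mathit{GM}=\sqrt{\mathit{TPR}\times\mathit{TNR}}\le 1$, with equality if and only if $\mathit{TPR}=\mathit{TNR}=1$, i.e.\ the classifier commits no error on either class. Hence it suffices to prove that each of $\mathit{CB}$ and $\mathit{BB}$ is error-free, in the asymptotic sense of the integrated class-conditional masses used throughout this section.

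First I would translate the hypothesis into a statement about the supports of the two densities. The Bayes classifier minimises the total error, whose value is $\int \min\{P(\omega_+)p(\x|\omega_+),\,P(\omega_-)p(\x|\omega_-)\}\,dx$. This integral vanishes exactly when the integrand is zero for almost every $\x$; since we take $0<P(\omega_+)<1$ so that both priors are strictly positive, this means that at almost every point at least one of $p(\x|\omega_+)$ and $p(\x|\omega_-)$ is zero. In other words, the supports of the two class-conditional densities are disjoint up to a set of Lebesgue measure zero. This disjointness is a property of the densities alone and does not depend on the particular priors, so if the Bayes error is zero for one admissible choice of priors it is zero for all of them.

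Next I would check that, on this disjoint-support structure, both decision rules assign the correct label wherever the relevant density is positive. At a point $\x$ with $p(\x|\omega_+)>0$ we have $p(\x|\omega_-)=0$ (outside a null set), so $p(\x|\omega_+)>p(\x|\omega_-)$ and likewise $P(\omega_+)p(\x|\omega_+)>P(\omega_-)p(\x|\omega_-)$ for every admissible prior; thus both $\mathit{BB}$ (comparing the unscaled densities) and $\mathit{CB}$ (comparing the prior-scaled densities) label $\x$ as $\omega_+$, which is correct. The symmetric argument applies on the support of $\omega_-$. The only points where the two rules could disagree or err lie in the overlap/tie set, which has Lebesgue measure zero and therefore contributes nothing to the integrals defining $\mathit{TPR}$ and $\mathit{TNR}$.

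Consequently $\mathit{TPR}=\mathit{TNR}=1$ for both $\mathit{CB}$ and $\mathit{BB}$, whence $\mathit{GM}(\mathit{CB})=\mathit{GM}(\mathit{BB})=1=\mathit{GM}^*$, and both are GM-optimal for every choice of prior. The one step requiring care --- and the main obstacle --- is the measure-zero bookkeeping: I must argue rigorously that ``Bayes error zero'' is equivalent to essentially disjoint supports, and that the tie set on which $\mathit{BB}$ and $\mathit{CB}$ might behave differently is null, so that neither classifier loses any probability mass there. Everything else follows immediately from $\mathit{GM}\le 1$ with equality precisely at error-free classification.
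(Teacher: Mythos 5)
Your proposal is correct and follows essentially the same route as the paper's own proof: zero Bayes error forces (essentially) disjoint class-conditional supports, hence both $\mathit{CB}$ and $\mathit{BB}$ classify without error, giving $\mathit{TPR}=\mathit{TNR}=1$ and thus $\mathit{GM}=1$, the maximum possible value. The only difference is that you make explicit the measure-theoretic bookkeeping (the equivalence of zero Bayes error with almost-everywhere disjoint supports, and the nullity of the tie set) that the paper's terser argument leaves implicit.
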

\begin{proof}
The Bayes error is zero, $\mathit{TPR}=\mathit{TNR}=1$,
giving $\mathit{GM}(\mathit{CB})=1$. Consider now the $\mathit{BB}$ classifier. The fact that the Bayes error is zero implies there is no region in which the pdfs of the two classes are both non-zero.  This implies that the classification error of the $\mathit{BB}$ classifier is also zero, with the same consequences as just outlined for the $\mathit{CB}$ classifier, giving $\mathit{GM}(\mathit{BB})=1$. The $\mathit{CB}$ and $\mathit{BB}$ classifiers have the highest possible value of GM and are therefore optimal with respect to this measure.
\end{proof}
(The $\mathit{GM}^*$, $\mathit{BB}$ and $\mathit{CB}$ classifiers have the same decision boundary in this case.)

\medskip
\begin{proposition}[Non-optimality of $\mathit{CB}$ and $\mathit{BB}$] 
\label{CBBB}
Neither the Bayes classifier $\mathit{CB}$ nor the Balanced Bayes classifier $\mathit{BB}$ are necessarily optimal with respect to GM.
\end{proposition}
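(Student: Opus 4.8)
The plan is to prove this proposition by exhibiting a single counterexample, and the natural choice is to reuse the one-dimensional example already constructed in Section~\ref{gm_priors}, where $p(x|\omega_+)=\tfrac{1}{9}$ on $[0,9]$ and $p(x|\omega_-)=\tfrac{1}{7}$ on $[3,10]$. We have already established there that the GM-optimal decision boundary lies at $b=5$, with $\mathit{GM}^{*}=\sqrt{5(10-5)/63}\approx 0.6299$. It therefore suffices to show that neither $\mathit{BB}$ nor $\mathit{CB}$ places its boundary at $b=5$, and that the GM each achieves is strictly smaller than $\mathit{GM}^{*}$.

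First I would dispatch the $\mathit{BB}$ classifier, which compares the unscaled densities directly. On $[0,3)$ only $p(x|\omega_+)$ is non-zero, on $[3,9]$ we have $p(x|\omega_-)=\tfrac{1}{7}>\tfrac{1}{9}=p(x|\omega_+)$, and on $(9,10]$ only $p(x|\omega_-)$ is non-zero. Hence $\mathit{BB}$ labels the region to the left of $x=3$ as positive and everything to the right as negative, i.e.\ its boundary is $b=3$. Reading off the GM curve gives $\mathit{GM}(\mathit{BB})=\sqrt{3/9}\approx 0.5774<\mathit{GM}^{*}$, so $\mathit{BB}$ is not optimal.

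Next I would treat $\mathit{CB}$, whose boundary depends on the priors through the scaled densities $P(\omega_+)\,p(x|\omega_+)$ and $P(\omega_-)\,p(x|\omega_-)$. The key observation is that both densities are \emph{constant} on the overlap interval $[3,9]$, so the sign of their scaled difference is constant there and the crossing can occur only at one of the two discontinuities, $x=3$ or $x=9$. A short calculation shows that $\mathit{CB}$ puts its boundary at $b=3$ when $P(\omega_+)<\tfrac{9}{16}$ and at $b=9$ when $P(\omega_+)>\tfrac{9}{16}$ (with indifference on $[3,9]$ at the threshold). Since $\mathit{GM}(3)\approx 0.5774$ and $\mathit{GM}(9)=\sqrt{9/63}=\sqrt{1/7}\approx 0.3780$ are both strictly below $\mathit{GM}^{*}\approx 0.6299$, the classifier $\mathit{CB}$ fails to be GM-optimal for every admissible prior, and in particular for the balanced priors (where $P(\omega_+)=\tfrac{1}{2}<\tfrac{9}{16}$) at which $\mathit{CB}$ coincides with $\mathit{BB}$.

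The only genuine obstacle is the prior-dependence of $\mathit{CB}$: a careless argument might fix one prior and overlook the possibility that some other prior recovers the optimal boundary. The constancy-on-$[3,9]$ observation closes this gap cleanly by confining every possible $\mathit{CB}$ boundary to the set $\{3,9\}$, which is disjoint from the GM-optimum at $x=5$. With both classifiers shown to be strictly sub-optimal on this example, the proposition follows.
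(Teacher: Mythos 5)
Your proof is correct and follows essentially the same route as the paper's: both reuse the uniform-density counterexample from Section~\ref{gm_priors}, note that $\mathit{BB}$ places its boundary at $x=3$, and observe that scaling by any priors can only move the $\mathit{CB}$ boundary to $x=3$ or $x=9$, neither of which attains the optimum at $x=5$. Your only addition is making the prior-dependence quantitative (the $P(\omega_+)=\tfrac{9}{16}$ threshold), which the paper states qualitatively; this is a welcome sharpening but not a different argument.
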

\begin{proof}
The possible non-optimality of the Balanced Bayes classifier is demonstrated directly by the argument for the GM-optimal classifier in the previous subsection and illustrated in Figure~\ref{OptimalGM}.

The possible non-optimality of $\mathit{CB}$ can be argued using the same figure. When the class-conditional pdfs are multiplied by the respective prior probabilities $P(\omega_+)$ and $P(\omega_-)=1-P(\omega_+)$, we can have either of the horizontal arms of the scaled pdfs higher than the other. Then the intersection point will either stay the same at $x=3$ or change to  $x=9$. In both cases, the Bayes-optimal classifier $\mathit{CB}$ will miss the GM-optimal boundary at $x=5$.
\end{proof}

The non-optimality argument does not quantify how much the GM criterion will suffer if $\mathit{CB}$ or $\mathit{BB}$ is used instead of $\mathit{GM}^*$, nor does it give any estimate of how often optimality and non-optimality may occur in real-life problems. Therefore, based on theory, we cannot recommend that the prior probabilities should be scaled to 0.5 by balancing the samples from the classes, nor can we recommend that either $\mathit{CB}$ or $\mathit{BB}$ should be used instead of classifier trained explicitly with GM as the criterion. The practical situation, however, may not reflect this, as evidenced by a multitude of successful experiments with heuristic-based balancing methods.

The following example provides an illustration of the differences between the three classifiers. This time the classes come from a mixture of Gaussians, and the probability of error is strictly positive. The data is generated from the following distributions: 

\medskip\noindent
for the negative class (majority),
\[
p(\mathbf{x}|\omega_-)\sim N\left([0, 0]^T,I(2)\right),
\]
where $I(2)$ is the identity matrix of size 2; and for the positive class (minority),
\[
p(\mathbf{x}|\omega_+) = 0.6p_a(\mathbf{x}|\omega_+) + 0.4p_b(\mathbf{x}|\omega_+),
\]
where
\[p_a(\mathbf{x}|\omega_+)\sim N\left([-1, 1]^T,
\left[\begin{array}{rr}1&0\\0&0.3\end{array}\right]\right)
\]
and
\[p_b(\mathbf{x}|\omega_+)\sim N\left([2, -2]^T,
\left[\begin{array}{rr}0.4&0\\0&0.7\end{array}\right]\right).
\]
4,000 instances were generated from $p(\mathbf{x}|\omega_-)$ and 500 from the mixture of the positive class (300 from $p_a(\mathbf{x}|\omega_+)$ and 200 from $p_b(\mathbf{x}|\omega_+)$). Thus the prior probabilities were $\frac{8}{9}$ for the negative class and $\frac{1}{9}$ for the positive class.  The scatterplot of the data is shown in Figure~\ref{E2GM}. 

\begin{figure}[htb]
	\centering
	\includegraphics[width=0.8\linewidth]{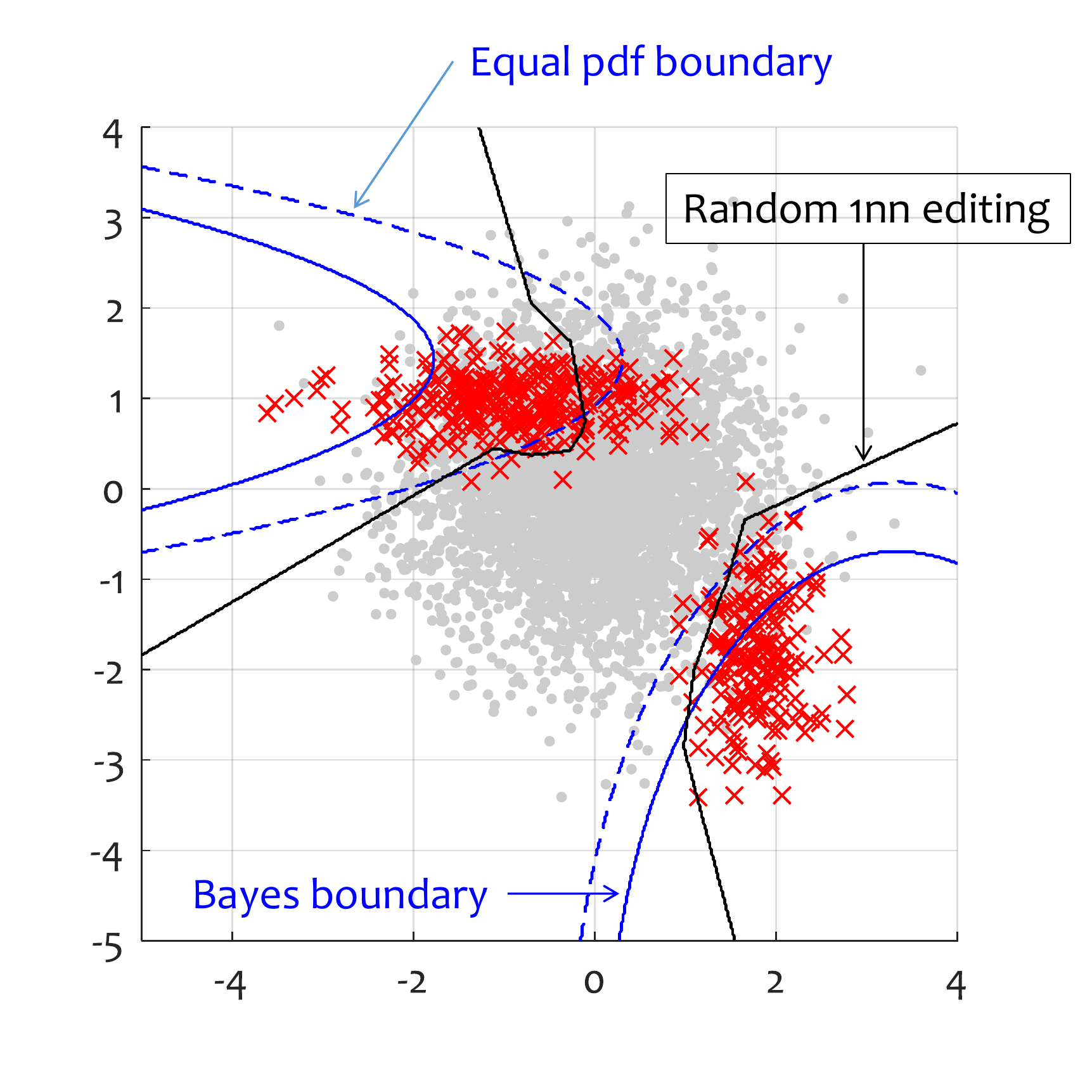}
	\caption{Scatterplot of the data for Example 2. The optimal classification boundaries for the Bayes classifier ($\mathit{CB}$,$\mathit{GM}=0.6383$) are shown with a solid blue line, and the boundaries for the Balanced Bayes classifier ($\mathit{BB}$, $\mathit{GM}=0.8322$), with a dashed blue line. The black line shows the classification regions for a 1NN with a reference set with 25 instances found through random editing ($\mathit{GM}=0.8330$).}
	\label{E2GM}
\end{figure}

Shown in Figure~\ref{E2GM} are also the 
classification boundaries for the Bayes classifier ($\mathit{CB}$, the solid blue line) and the Balanced Bayes classifier ($\mathit{BB}$, the dashed blue line). The GM estimates were calculated on a separate set of size 9,000 generated from the distribution of the problem. The GM value for $\mathit{CB}$ is 0.41, which is low, as expected. The GM for $\mathit{BB}$ is 0.70. This reflects the expected situation in real-life problems, giving support to the heuristic that balancing the classes (when the Bayes error is non-zero) is likely to lead to substantial improvement. 


Next we created an example to demonstrate that simple random editing (RE) may give a better GM compared to both the Bayes-optimal classifier and the Balanced Bayes classifier. We sampled 10,000 reference sets of cardinality 25 from the original data (4,500 instances), and chose the best set according to the training value of the RE GM. We chose the example so as to illustrate the possibility that the testing values of RE GM are better than both rival values: CB GM and BB GM. The classification regions for the random editing method are delineated in Figure~\ref{E2GM}. 



To summarise, the second example shows that:
\begin{itemize}
	\item Balancing of the classes in the case of non-separable classes (non-zero Bayes error) may be significantly better than using a classifier trained to minimise the classification error.
	\item 1-NN using using simple random editing (RE) may be better than both $\mathit{CB}$ and $\mathit{BB}$, even though the latter are formed using full knowledge of the underlying pdfs. This suggests that explicit maximisation of GM is likely to fare better than balancing the class proportions.
\end{itemize}

\section{Experiment} 
\label{experiment}

An experiment was designed to explore the implications of the theoretical findings in Section~\ref{tgmmfic} with respect to the most acclaimed methods for instance selection for imbalanced data. 

We aim to show that: 
\begin{enumerate}
  \item Instance selection generally improves GM, as suggested to be possible by Proposition~\ref{propreduction}, and the  example in Figure~\ref{E3}. As an example, we contrast 1-NN with the edited 1-NN. (Section~\ref{gm_improve}) 
  \item Random prototype selection may be better than systematic (non-random) selection. (Section~\ref{gm_mono})   
\item Optimising GM explicitly may be more successful than balancing the prior probabilities alone or not accounting for the imbalance at all. (Section~\ref{gm_priors})
\end{enumerate}

In the course of the experimental study, we also discovered that classifier ensemble methods for instance selection for imbalanced data are better than single instance selection methods.
It will be seen that there are significant unexplored possibilities for designing potentially successful methods for imbalanced data.      

\subsection{Experimental protocol}

We used two-fold cross-validation repeated 5 times~\cite{dietterich1998,diez2015rb}. Each data set is split into two sets of equal size; one is used for training and the other for testing. After that, the roles are reversed. The cross-validation folds are drawn using stratified sampling so that the imbalance ratio is mirrored in the training and testing parts. We decided against the common choice of using 10-fold cross-validation, because splitting of the imbalanced data sets into too many folds may produce folds with an inadequately small number of objects from the minority (positive) class.

\subsection{Data sets}

We used data sets from the KEEL data collection~\cite{Alcala2011}. This repository contains 66 binary imbalanced data sets\footnote{Available at \url{http://sci2s.ugr.es/keel/imbalanced.php}.}. These data sets are not all completely independent of each other. Several of them are variants of an original data set. For example, the ``yeast'' data set gives rise to 14 variants where the 10 classes are grouped in different ways to make the 14 imbalanced two-class problems. Table~\ref{tab:data-keel} shows the characteristics of this data set collection.

\begin{table*}[tb]
	\caption[Characteristics of the data sets from the KEEL collection.]{Characteristics of the data sets from the KEEL collection. 
		Column \#E shows the number of examples in the data set, 
		column \#A the number of attributes, numeric and nominal, in the format (numeric/nominal), and column IR the imbalance ratio (the number of instances of the majority class per instance of the minority class).}
	\label{tab:data-keel} \centering\scriptsize
	\begin{tabular}{@{}c@{\hspace{2mm}}c@{\hspace{2mm}}c@{}}
	\begin{tabular}{@{}l@{}rrr}
		\toprule
		Data set & \#E & \#A & IR \\
		\midrule
		abalone19 & 4174 & (7/1) & 129.44\\
		abalone9-18 & 731 & (7/1) & 16.40\\
		cleveland-0\_vs\_4 & 177 & (13/0) & 12.62\\
		ecoli-0-1-3-7\_vs\_2-6 & 281 & (7/0) & 39.14\\
		ecoli-0-1-4-6\_vs\_5 & 280 & (6/0) & 13.00\\
		ecoli-0-1-4-7\_vs\_2-3-5-6 & 336 & (7/0) & 10.59\\
		ecoli-0-1-4-7\_vs\_5-6 & 332 & (6/0) & 12.28\\
		ecoli-0-1\_vs\_2-3-5 & 244 & (7/0) & 9.17\\
		ecoli-0-1\_vs\_5 & 240 & (6/0) & 11.00\\
		ecoli-0-2-3-4\_vs\_5 & 202 & (7/0) & 9.10\\
		ecoli-0-2-6-7\_vs\_3-5 & 224 & (7/0) & 9.18\\
		ecoli-0-3-4-6\_vs\_5 & 205 & (7/0) & 9.25\\
		ecoli-0-3-4-7\_vs\_5-6 & 257 & (7/0) & 9.28\\
		ecoli-0-3-4\_vs\_5 & 200 & (7/0) & 9.00\\
		ecoli-0-4-6\_vs\_5 & 203 & (6/0) & 9.15\\
		ecoli-0-6-7\_vs\_3-5 & 222 & (7/0) & 9.09\\
		ecoli-0-6-7\_vs\_5 & 220 & (6/0) & 10.00\\
		ecoli-0\_vs\_1 & 220 & (7/0) & 1.86\\
		ecoli1 & 336 & (7/0) & 3.36\\
		ecoli2 & 336 & (7/0) & 5.46\\
		ecoli3 & 336 & (7/0) & 8.60\\
		ecoli4 & 336 & (7/0) & 15.80\\
		\bottomrule
	\end{tabular} 
	&
	\begin{tabular}{@{}l@{}rrr}
		\toprule
		Data set & \#E & \#A & IR \\
		\midrule
		glass-0-1-2-3\_vs\_4-5-6 & 214 & (9/0) & 3.20\\
		glass-0-1-4-6\_vs\_2 & 205 & (9/0) & 11.06\\
		glass-0-1-5\_vs\_2 & 172 & (9/0) & 9.12\\
		glass-0-1-6\_vs\_2 & 192 & (9/0) & 10.29\\
		glass-0-1-6\_vs\_5 & 184 & (9/0) & 19.44\\
		glass-0-4\_vs\_5 & 92 & (9/0) & 9.22\\
		glass-0-6\_vs\_5 & 108 & (9/0) & 11.00\\
		glass0 & 214 & (9/0) & 2.06\\
		glass1 & 214 & (9/0) & 1.82\\
		glass2 & 214 & (9/0) & 11.59\\
		glass4 & 214 & (9/0) & 15.46\\
		glass5 & 214 & (9/0) & 22.78\\
		glass6 & 214 & (9/0) & 6.38\\
		haberman & 306 & (3/0) & 2.78\\
		iris0 & 150 & (4/0) & 2.00\\
		led7digit-0-2-4-5-6-7-8-9\_vs\_1 & 443 & (7/0) & 10.97\\
		new-thyroid1 & 215 & (5/0) & 5.14\\
		new-thyroid2 & 215 & (5/0) & 5.14\\
		page-blocks-1-3\_vs\_4 & 472 & (10/0) & 15.86\\
		page-blocks0 & 5473 & (10/0) & 8.77\\
		pima & 768 & (8/0) & 1.87\\
		segment0 & 2310 & (19/0) & 6.00\\
		\bottomrule
	\end{tabular} 
	&
	\begin{tabular}{lrrr}
		\toprule
		Data set & \#E & \#A & IR \\
		\midrule
		shuttle-c0-vs-c4 & 1829 & (9/0) & 13.87\\
		shuttle-c2-vs-c4 & 129 & (9/0) & 20.50\\
		vehicle0 & 846 & (18/0) & 3.25\\
		vehicle1 & 846 & (18/0) & 2.90\\
		vehicle2 & 846 & (18/0) & 2.88\\
		vehicle3 & 846 & (18/0) & 2.99\\
		vowel0 & 988 & (13/0) & 9.98\\
		wisconsin & 683 & (9/0) & 1.86\\
		yeast-0-2-5-6\_vs\_3-7-8-9 & 1004 & (8/0) & 9.14\\
		yeast-0-2-5-7-9\_vs\_3-6-8 & 1004 & (8/0) & 9.14\\
		yeast-0-3-5-9\_vs\_7-8 & 506 & (8/0) & 9.12\\
		yeast-0-5-6-7-9\_vs\_4 & 528 & (8/0) & 9.35\\
		yeast-1-2-8-9\_vs\_7 & 947 & (8/0) & 30.57\\
		yeast-1-4-5-8\_vs\_7 & 693 & (8/0) & 22.10\\
		yeast-1\_vs\_7 & 459 & (7/0) & 14.30\\
		yeast-2\_vs\_4 & 514 & (8/0) & 9.08\\
		yeast-2\_vs\_8 & 482 & (8/0) & 23.10\\
		yeast1 & 1484 & (8/0) & 2.46\\
		yeast3 & 1484 & (8/0) & 8.10\\
		yeast4 & 1484 & (8/0) & 28.10\\
		yeast5 & 1484 & (8/0) & 32.73\\
		yeast6 & 1484 & (8/0) & 41.40\\
		\bottomrule
	\end{tabular}
\end{tabular}
\end{table*}

\subsection{Instance classification methods}


The two methods used as a baseline were the nearest neighbour classifier trained over the whole data set (1-NN, Method \#~1), and Bagging of 1-NN (BAG1NN, Method \#~2). Neither of these methods are adapted in any way to imbalanced data.

The list of instance selection methods we used is given below. The methods are grouped according to the level of randomness involved in the selection of instances: first are listed methods that are completely random; then methods with some degree of randomness; and finally, completely deterministic methods.

\medskip\noindent
{\em (3) RUS.} Random undersampling~\cite{barandela2003}: Let $N_+$ be the number of instances in the minority class. RUS balances the class distribution by drawing a random sample of size  $N_+$ from the majority class.

\medskip\noindent
{\em (4) ERUS.} Ensemble of RUS: ensemble of RUS combined by majority vote.

\medskip\noindent
{\em (5) RUSBOOST.} Boosting of RUS~\cite{seiffert2010}: an AdaBoost.M2~\cite{AdaBoostFreund1997} variant that performs re-weighting and RUS in each iteration.

\medskip\noindent
{\em (6) EUSBOOST}~\cite{galar2013}: AdaBoost-like ensemble of EUS (see next).

 \medskip\noindent
{\em (7) EUS.} Evolutionary undersampling~\cite{garcia2009taxonomy}: a version of a genetic algorithm which directly optimises GM. Only instances from the majority class are encoded as the chromosome. A taxonomy of evolutionary undersampling methods and an experimental study are presented in~\cite{garcia2009taxonomy}; we used the variant EBUS-MS-GM recommended by the authors.

\medskip\noindent
{\em (8) PSO.} Particle-swarm optimisation (PSO)~\cite{yang2009}: a particle-swarm algorithm maximising a combination of the most common metrics from imbalanced classes: the area under the ROC curve (AUC), the F-measure and GM. 
 
\medskip\noindent
{\em (9) TL.} Tomek links~\cite{tomek1976}: an adaptation of the Tomek links rule. Tomek links are pairs of instances from different classes such that the instances are each other's nearest neighbour. In the original rule, to clean the border from suspected noise, both instances in the Tomek link are removed. In the adapted version for imbalanced classes, only instances of the majority class which participate in Tomek links are removed.
  
\medskip\noindent
{\em (10) OSS.} One-sided selection~\cite{kubat1997}: condensing followed by TL. The condensing is done by a modification of Hart's Condensed Nearest Neighbour (CNN)~\cite{hart1968} for imbalanced classes~\cite{kubat1997}. The modified algorithm starts with all instances from the minority class plus a random majority instance. Majority instances are added, one at a time, only if misclassified by the current reference set.
 
\medskip\noindent
{\em (11) TL+CNN.} Tomek links + CNN (the modification for imbalanced data)~\cite{batista2004}\footnote{We noticed that, while the original OSS is defined by Kubat in~\cite{kubat1997} as CNN followed by TL, later on, Batista~\cite{batista2004} defined it in reverse order and also independently proposed an equivalent to Kubat's OSS. This misunderstanding has spread in subsequent works. However, we have maintained the original name OSS for CNN+TL, as used~\cite{kubat1997}, and we use TL+CNN for Batista et al.'s method~\cite{batista2004}.}.

\medskip\noindent
{\em (12) NCL.} Neighbourhood cleaning rule~\cite{laurikkala2001}: an adaptation of Wilson's Editing rule (ENN)~\cite{wilson1972} to the case of imbalanced learning. ENN marks and subsequently removes all instances misclassified by their 3 nearest neighbours. NCL does this only for instances from the majority class. If an instance from the minority class is misclassified by its 3 nearest neighbours, the instances voting for the majority class are marked for removal instead.

%

\medskip
The experiments were performed in Weka 3.7.11~\cite{wekaBookWitten2011}. The algorithms used directly from the Weka collection were 1NN, BAG1NN, and RUS\footnote{The random selection was performed by using the SpreadSubsample instance supervised filter.}. We created our own, not optimised, implementation of OSS, NCL, TL, TL+CNN, ERUS and RUSBOOST. We used publicly available code for EUSBOOST, EUS\footnote{Available in the KEEL GitHub repository: \url{https://github.com/SCI2SUGR/KEEL}}, and PSO\footnote{Available in Google code: \url{https://code.google.com/archive/p/imbalanced-data-sampling/}}, adapted to Weka by means of a wrapper. The ensemble size for the boosting methods (EUSBOOST and RUSBOOST) was set to 10, and for the other ensemble methods (BAG1NN and ERUS), to 100. The parameters of the preprogrammed methods were not changed from the default values, and the parameters for our implementations were the ones recommended in the original studies.

\subsection{Results and discussion}

As a result of the 5-times 2-fold cross-validation, each of the 66 data sets generates 10 GM values, giving a total of 660 values. For each data set, we used the same cross-validation folds for all the tested algorithms, meaning we can use the paired values of GM to determine which of a group of competing methods ``wins'' (has the largest GM value) for a given data set and fold. 




\begin{figure*}[htb]
\centering
\includegraphics[width=0.9\linewidth]{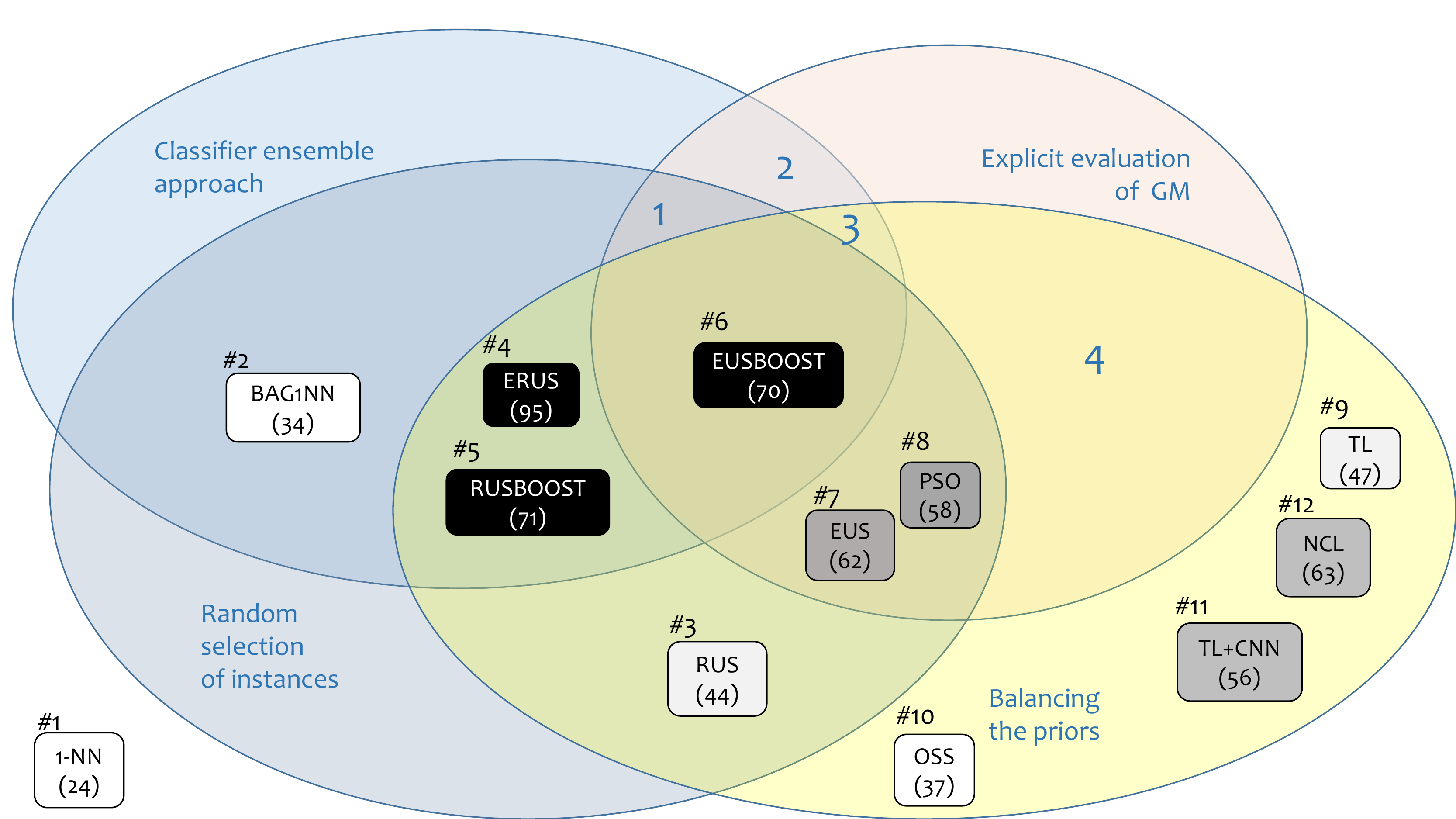}
\caption{A Venn diagram of the properties of the instance selection methods. 
The number in the brackets is the number of wins over all other methods out of the 660 comparisons. The shading of each method's box reflects this number of wins. The most successful methods in this experiment are shaded in black and the least successful, in white.}
\label{MethodsDiagram}
\end{figure*}

The main result of our experiment is shown in Figure \ref{MethodsDiagram} as a Venn diagram. The 12 imbalance classification methods are positioned within the ellipses representing the properties of interest. Each method is shown as round-cornered box. The number of the method is shown above the box. Underneath the method's name in the box, we give the total number of wins for this method out of the 660 comparisons. The tied wins were split among the winning methods: if methods A, B, and C won a comparison, one third was added to the winning count of each method. The figure shows the number of wins calculated in this way, rounded to the nearest integer.  The boxes of the methods are coloured in different shades of grey reflecting the number of wins of the method. The most successful methods are shown in black, and the least successful in white.

We look at the performance of the methods in relationship to: whether the method explicitly evaluates the GM measure; whether it balances the prior probabilities of the classes; whether it makes use of random selection; and, finally, we add another property:  whether the method uses an ensemble approach. Note that possession of the second and third properties should not be viewed as a perfectly binary question.  In particular, methods 9--12 only partially reduce the size of the larger class, so are less strongly balancing than the methods which enforce equal class sizes, and methods 6--8 have the randomness of their selections tempered by an evolutionary process. 

The full set of results (a table with GM values, of size $12\times 660$) is provided in the supplementary material. Table~\ref{tab:stats} shows the p-values for the sign test between the GM values for the 12 methods. Each pair of methods generates one entry in the table. The statistically significant differences where the method of the row is better than the method of the column at level 0.05 are marked with boxes. Bonferroni correction for multiple comparisons has been applied.

\begin{table*}
\caption{P-values for the sign test between the GM values for the 12 methods. The statistically significant differences where the method of the row is better than the method of the column at level 0.05 are marked with boxes. Bonferroni correction for multiple comparisons has been applied.}
\label{tab:stats}
\centering
\begin{tabular}{rrrrrrrrrrrrr}
&(1) & (2) & (3) & (4) & (5) & (6) & (7) & (8) & (9) & (10) & (11) & (12) \\
\hline
1-NN (1)& 1.000 & 1.000 & 1.000 & 1.000 & 1.000 & 1.000 & 1.000 & 1.000 & 1.000 & 1.000 & 1.000 & 1.000 \\
BAG1NN (2)& \fbox{0.000} & 1.000 & 1.000 & 1.000 & 1.000 & 1.000 & 1.000 & 1.000 & 1.000 & 1.000 & 1.000 & 1.000 \\
RUS (3)& \fbox{0.000} & \fbox{0.000} & 1.000 & 1.000 & 1.000 & 1.000 & 1.000 & 0.926 & 0.108 & \fbox{0.000} & 0.437 & 0.868 \\
ERUS (4)& \fbox{0.000} & \fbox{0.000} & \fbox{0.000} & 1.000 & 0.340 & 0.452 & 0.034 & \fbox{0.000} & \fbox{0.000} & \fbox{0.000} & \fbox{0.000} & \fbox{0.000} \\
RUSBOOST (5)& \fbox{0.000} & \fbox{0.000} & \fbox{0.000} & 0.690 & 1.000 & 0.989 & 0.435 & \fbox{0.000} & \fbox{0.000} & \fbox{0.000} & \fbox{0.000} & 0.009 \\
EUSBOOST (6)& \fbox{0.000} & \fbox{0.000} & \fbox{0.000} & 0.580 & 0.013 & 1.000 & \fbox{0.000} & \fbox{0.000} & \fbox{0.000} & \fbox{0.000} & \fbox{0.000} & 0.003 \\
EUS (7)& \fbox{0.000} & \fbox{0.000} & \fbox{0.000} & 0.971 & 0.597 & 1.000 & 1.000 & \fbox{0.000} & \fbox{0.000} & \fbox{0.000} & \fbox{0.000} & 0.038 \\
PSO (8)& \fbox{0.000} & \fbox{0.000} & 0.086 & 1.000 & 1.000 & 1.000 & 1.000 & 1.000 & 0.028 & \fbox{0.000} & 0.007 & 0.532 \\
TL (9)& \fbox{0.000} & \fbox{0.000} & 0.906 & 1.000 & 1.000 & 1.000 & 1.000 & 0.977 & 1.000 & 0.112 & 0.213 & 0.906 \\
OSS (10)& \fbox{0.000} & \fbox{0.000} & 1.000 & 1.000 & 1.000 & 1.000 & 1.000 & 1.000 & 0.902 & 1.000 & 1.000 & 1.000 \\
TL+CNN (11)& \fbox{0.000} & \fbox{0.000} & 0.594 & 1.000 & 1.000 & 1.000 & 1.000 & 0.994 & 0.810 & \fbox{0.000} & 1.000 & 1.000 \\
NCL (12)& \fbox{0.000} & \fbox{0.000} & 0.150 & 1.000 & 0.993 & 0.997 & 0.968 & 0.500 & 0.110 & \fbox{0.000} & \fbox{0.000} & 1.000 \\
\hline
\end{tabular}
\end{table*}

Here we consider the findings from the experiment with reference to the three points listed at the beginning of this section. 

\medskip
\noindent
{\em 1) Instance selection improves GM.} This is evident from the low number of wins for the 1-NN classifier (24) compared to all other methods (boxes in Figure~\ref{MethodsDiagram}), and from the first column in Table~\ref{tab:stats}, showing that all methods applying instance selection are significantly better than 1-NN.

\medskip
\noindent
{\em 2) Random prototype selection is generally better than systematic selection.} 
The three most successful algorithms are of this type. Excluding 1-NN where no selection is carried out, random selection is represented through BAG1NN~(34), ERUS~(95), EUSBOOST~(70), RUSBOOST~(71), PSO~(58), EBUS~(62), and RUS~(44), averaging at 62 wins. The alternative group contains OSS~(37), TL~(47), TL+CNN~(56), and NCL~(63), with average 50.8 wins. 

Table~\ref{tab:stats} offers another quantification method for this issue. Consider the group of 7 methods with random selection $\{2, 3, 4, 5, 6, 7, 8\}$ and the alternative group of 4 methods $\{9, 10, 11, 12\}$. Thus, there are $7\times 4 = 28$ pairwise comparisons $A$ versus $B$, where $A$ is a method from the first group, and $B$, from the second group. In these 24 comparisons, The number of significant wins of the methods in the first group (random selection), counted from Table~\ref{tab:stats} is 15, while the number of significant wins for the second group (non-random selection) is 4. For the remaining 5 comparisons the differences were not found to be significant.

Both arguments above lend support to our theoretical arguments that random prototype selection is preferable in view of the non-monotonicity of GM, and the fact that the importance of a prototype is determined only in reference to its neighbours. 

\medskip
\noindent
{\em 3) Is optimising GM explicitly more successful than balancing the prior probabilities alone or not accounting for the imbalance at all?}
The Venn diagram in Figure~\ref{MethodsDiagram} does not show clear evidence either way. The group of explicitly-optimising algorithms is smaller, yet all three perform better than the majority of the larger group.  Table~\ref{tab:stats} gives a finer-grained quantification of the difference between the two groups. This time, the groups are: methods $\{6, 7, 8\}$ for the explicit GM optimisation, and methods $\{2, 3, 4, 5, 9, 10, 11, 12\}$ for the other methods, giving again $3 \times 8 = 24$ pairwise comparisons. Out of these, the number of statistically significant wins for the first group (from Table~\ref{tab:stats}) is 12, and for the second (larger) group,~4.

Based on the above, we suggest that explicitly-optimising algorithms may be the more fruitful area for further development, despite the fact that the current best methods are not of this type.

Finally, we observed that Ensemble methods for instance selection are better than single classifiers, when using NN methods for imbalanced data. Classifier ensembles are known to be generally better than single classifiers in the traditional cases of balanced data~\cite{kuncheva2014}. However, it is often said that ensembling of 1-NN classifiers does not work very well due to insufficient diversity. 
We did see poor performance in our experiment from the classical bagging 1-NN ensemble (BAG1NN). However, the other ensemble instance selection methods (those which are combined with an imbalance-specific approach such as balancing the priors or explicit evaluation of GM) score above the non-ensemble methods. (Compare the black boxes in Figure~\ref{MethodsDiagram} with the non-black boxes, excluding BAG1NN.) The average win score of the ensemble methods (BAG1NN~(34), ERUS~(95), RUSBOOST~(71), and EUSBOOST~(70)) is 67.5 whereas the average win of the other methods (excluding 1-NN, the worst case) is 52.4. 

Our categorisation of the existing methods according to four properties suggests that there are unexplored spaces for designing potentially-successful methods for imbalanced data. Note the areas in Figure~\ref{MethodsDiagram} marked with numbers 1, 2, 3, and 4. To the best of our knowledge, there are no widely-used methods with the particular combination of properties which correspond to these regions. It may be interesting to develop algorithms in these areas and examine their performance in relation to the ones in this study.  In particular, in view of the reasons discussed above for considering methods which explicitly optimize GM but do not explicitly balance the priors, and in view of the superior performance of editing methods which make use of random sampling, the possibility of developing algorithms in the area marked 1 seems most promising.

\section{Conclusions}
In view of the importance of the classification of imbalanced data for real-life problems, we 
sought to give some theoretical argument in support of the heuristics and techniques for data manipulation, which seemed to be lacking so far.
Hence, this paper considers the geometric mean of the true positive rate and the true negative rate (GM) in two-class imbalanced problems from a general instance-selection perspective. An experiment with 12 state-of-the-art classification methods and 66 benchmark data sets verifies our theoretical arguments and hypotheses. 

We prove that the performance of 1-NN with respect to GM may be improved by instance selection. This is reinforced by the numerical example we give, and also by the results of our experimental study.

Next we argue that systematic instance selection may be inferior to random instance selection in relation to GM, which lends support to one of the most successful approaches to handling imbalanced data. This is also confirmed by our experimental findings.

We also prove by counter-examples that the Bayes classifier as well as the ``Balanced Bayes'' classifier (which assigns an object to the class with the greater class-conditional probability density at each point, not scaled by the prior probability), are not in general GM-optimal.  This suggests that designing algorithms to balance the class priors for imbalanced data problems is not necessarily GM-optimal. However, while optimality is not guaranteed, balancing the priors is still one of the most successful heuristics. Our empirical findings indicate that explicit maximisation of GM may be a promising ground for future research.


We show with the help of a Venn diagram that there may be unexplored possibilities for devising successful instance selection methods for imbalanced data which optimise GM explicitly.

Finally, we list some limitations of our study:
\begin{itemize}
\item We consider instance selection for the {\em single} nearest neighbour classifier, the 1-NN. Better results may be obtained using $k$-NN.
\item Euclidean distance was used throughout for finding the prototypes and forming the Voronoi cells. We did not consider alternative metrics or distance metric adaptations, local or global.
\item The prototypes forming the reference set were assumed to be elements of the training set, and therefore fixed in the space (the instance-selection approach). It can be proved that better results are possible if we allow the prototypes' locations to be tuned.
\item The performance measure used throughout this study was the geometric mean of the true positive and true negative rate. While the F-measure and the AUC are related to GM, our results do not directly extend to these measures.
\item We considered two-class problems because GM is defined for two classes. However, there are ways of  extending our results to multi-class problems.
\end{itemize}

\section*{Acknowledgment}
This work was done under project RPG-2015-188 funded by The Leverhulme Trust, UK; the project TIN2015-67534-P funded by the \emph{Ministerio de Economía y Competitividad} of the Spanish Government and the BU085P17 funded by the \emph{Junta de Castilla y León}.

\end{document}